\newtheorem{theorem}{Theorem}
\newtheorem{lemma}{\bf{Lemma}}
\newtheorem{definition}{\bf{Definition}}
\newtheorem{proposition}{\bf{Proposition}}
\title{Information Utilization Ratio in Heuristic Optimization Algorithms}
\author{Junzhi Li and Ying Tan\\
Computational Intelligence Laboratory\\
Peking University\\
\{ljz,ytan\}@pku.edu.cn}
\begin{document}

\maketitle

\begin{abstract}
Heuristic algorithms are able to optimize objective functions efficiently because they use intelligently the information about the objective functions. Thus, information utilization is critical to the performance of heuristics. However, the concept of information utilization has remained vague and abstract because there is no reliable metric to reflect the extent to which the information about the objective function is utilized by heuristic algorithms. In this paper, the metric of information utilization ratio (IUR) is defined, which is the ratio of the utilized information quantity over the acquired information quantity in the search process. The IUR proves to be well-defined. Several examples of typical heuristic algorithms are given to demonstrate the procedure of calculating the IUR. Empirical evidences on the correlation between the IUR and the performance of a heuristic are also provided. The IUR can be an index of how finely an algorithm is designed and guide the invention of new heuristics and the improvement of existing ones.
\end{abstract}

\section{Introduction}
In the field of computer science, many heuristic algorithms have been developed to solve complex non-convex optimization problems. Although optimal solutions are not guaranteed to be found, heuristics can often find acceptable solutions at affordable cost. The key to designing a heuristic algorithm is to use heuristic information about the objective function. Many algorithms \cite{srinivas1994genetic,storn1997differential,dorigo2006ant} are claimed to be reasonably designed because they use heuristic information intelligently. Even more algorithmic improvement works \cite{harik1999compact,zhang2009jade,7508443} are claimed to be significant because they use more heuristic information or use heuristic information more thoroughly than the original algorithms.

Empirically, heuristic information is used more thoroughly in more advanced algorithms. Suppose there are two search algorithms A and B for one dimensional optimization. Algorithm A compares the evaluation values of the solutions $x_1$ and $x_2$ to decide which direction (left or right) is more promising, while algorithm B uses their evaluation values to calculate both the direction and the step size for the next search. If the underlying distribution of objective functions is already known, then algorithm B is able to search faster than algorithm A if they are both reasonably designed because more information is utilized by algorithm B. It has been a common sense in the field of heuristic search that the extent of information utilization in a heuristic algorithm is crucial to its performance.

However, so far there is no reliable metric to reflect the extent of information utilization because unlike direct performance analyses \cite{Jones2001A,He2015Average}, this issue seems abstract. Especially, it is very difficult to measure how much information is used by an optimization algorithm.

In this paper, based on some basic concepts in the information theory, a formal definition of the information utilization ratio (IUR) is proposed, which is defined as the ratio of the utilized information quantity over the acquired information quantity in the search process. It is shown theoretically that IUR is well-defined. Examples of typical heuristic algorithms are also given to demonstrate the procedure of calculating IURs.

Theoretically, IUR itself is a useful index of how finely an algorithm is designed, but we still expect it to be practically serviceable, that is, we need to study the correlation between IUR and performance. However, the correlation between IUR and performance of heuristics is not so straightforward as some may expect. The performance of an optimization algorithm depends not only on the extent of information utilization but also on the manner of information utilization. Still, for algorithms that utilize information in similar manners, the influence of the IUR is often crucial, as is illustrated in the experiments.

After all, the metric of IUR helps researchers construct a clear (but not deterministic) relationship between the design and the performance of an optimization algorithm, which makes it possible that researchers can to some extent predict the performance of an algorithm even before running it. Thus, the IUR can be a useful index for guiding the design and the improvement of heuristic optimization algorithms.

%The remainder of this paper is structured as follows. In Section \ref{sec2} we give the definition of the IUR and prove it is defined. In Section \ref{sec3} we show how to calculate the IURs of practical optimization algorithms. In Section \ref{sec4} we discuss empirically the correlation between the IUR and the performance. Section \ref{sec5} concludes this paper.

\section{Information Utilization Ratio} \label{sec2}
\begin{definition}[Information Entropy]
The information entropy of a discrete random variable $X$ with possible values $x_i$ and probability density $p(x_i)$ is defined as follows.
\begin{equation}
H(X)=-\sum _{i} p(x_i)\log p(x_i).
\end{equation}
\end{definition}
\begin{definition}[Conditional Entropy]
The conditional entropy of two discrete random variables $X$ and $Y$ with possible values $x_i$ and $y_j$ respectively and joint probability density $p(x_i,y_j)$ is defined as follows.
\begin{equation}
H(X|Y)=-\sum _{i,j} p(x_i,y_j)\log \frac{p(x_i,y_j)}{p(y_j)}.
\end{equation}
\end{definition}
Some elementary properties of information entropy and conditional entropy are frequently used in this paper, which however cannot be present here due to the limitation of space. We refer readers who are unfamiliar with the information theory to the original paper \cite{6773067} or other tutorials. %According to our experience, reading the following part of this section may be quite challenging especially for readers who are either unfamiliar with the information theory or unfamiliar with optimization, though we have tried our best to explain our idea.

The following lemma defines a useful function for calculating the IURs of various algorithms.

\begin{lemma}
If $\eta_1,\eta_2, \dotsc, \eta_{g+1}\in \mathbb{R}$ are independent identically distributed random variables,
\begin{align}
H(I(\min(\eta_1,\eta_2, \dotsc, \eta_{g})<\eta_{g+1})) \notag \\
= - \frac{g}{{g + 1}}\log \frac{g}{{g + 1}}- \frac{1}{g + 1} \log \frac{1}{g + 1} \triangleq \pi(g),
\end{align}
where
$
I(x<y) = \left\{
\begin{array}{lll}
1       &      & \text{if } x<y\\
0       &      & \text{otherwise}
\end{array} \right.
$
is the indicator function.
\end{lemma}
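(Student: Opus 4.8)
The plan is to exploit the fact that $I(\min(\eta_1,\ldots,\eta_g)<\eta_{g+1})$ is a binary random variable, so that the entire computation collapses to finding a single probability. Write $Z=I(\min(\eta_1,\ldots,\eta_g)<\eta_{g+1})$. Since $Z$ takes only the values $0$ and $1$, Definition~1 gives $H(Z)=-p\log p-(1-p)\log(1-p)$ with $p=P(Z=1)$. Thus the whole lemma follows once I establish $p=\frac{g}{g+1}$, after which direct substitution into the binary entropy formula reproduces exactly $\pi(g)$.

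To compute $p$, I would pass to the complementary event. We have $Z=0$ precisely when $\min(\eta_1,\ldots,\eta_g)\ge\eta_{g+1}$, i.e. when $\eta_{g+1}\le\eta_i$ for every $i\le g$, which is the statement that $\eta_{g+1}$ realizes the minimum of the whole collection $\eta_1,\ldots,\eta_{g+1}$. Because the $\eta_i$ are i.i.d., their joint law is exchangeable, so every one of the $g+1$ indices is equally likely to be the one attaining the minimum. Hence $P(Z=0)=\frac{1}{g+1}$ and $p=P(Z=1)=\frac{g}{g+1}$, which is the key numerical input.

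The main obstacle is the treatment of ties: the clean symmetry argument yields $P(Z=0)=\frac{1}{g+1}$ only when the minimum is almost surely attained by a unique index, i.e. when the common law of the $\eta_i$ is continuous so that $P(\eta_i=\eta_j)=0$ for $i\neq j$. I would therefore make this continuity assumption explicit (it is the natural setting for real-valued objective values) and note that without it, for instance for a degenerate $\eta_i$, the identity can break down. Under continuity the events in which $\eta_k$ is the strict minimum, for $k=1,\ldots,g+1$, partition the sample space up to a null set and carry equal probability by exchangeability, which rigorously justifies $P(Z=0)=\frac{1}{g+1}$. Substituting $p=\frac{g}{g+1}$ into $-p\log p-(1-p)\log(1-p)$ then finishes the proof.
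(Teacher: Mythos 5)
Your proof is correct. The paper in fact states this lemma without any proof at all, so there is nothing to compare against; your argument---reduce $H(Z)$ for $Z=I(\min(\eta_1,\dotsc,\eta_g)<\eta_{g+1})$ to the binary entropy $-p\log p-(1-p)\log(1-p)$, then compute $P(Z=0)=\frac{1}{g+1}$ by observing that $Z=0$ exactly when $\eta_{g+1}$ attains the minimum of all $g+1$ exchangeable variables---is the natural route and fills a gap the authors left open. Your point about ties is moreover a genuine correction to the statement, not mere pedantry: as written, for arbitrary i.i.d.\ real-valued random variables the lemma is false (take the $\eta_i$ to be deterministic, or Bernoulli; then $P(Z=0)>\frac{1}{g+1}$ and the entropy is not $\pi(g)$). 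This matters for the paper itself, since its examples include discrete codomains (e.g.\ $f(x)$ uniform over $|\mathcal{Y}|=100!$ values in the TSP example), where ties have positive probability and the identity holds only approximately. The atomless assumption you add, under which the events ``$\eta_k$ is the strict minimum'' partition the sample space up to a null set and have equal probability $\frac{1}{g+1}$ by exchangeability, is exactly the hypothesis needed to make the lemma true.
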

$\pi(g)\in(0,1]$ is a monotonic decreasing function of $g$.

\begin{definition}[Objective Function]
The objective function is a mapping $f:\mathcal{X} \mapsto \mathcal{Y}$, where $\mathcal{Y}$ is a totally ordered set.
 %with $|\mathcal{Y}| = n$ and for any $x \in \mathcal{X}$, $f(x) \in \mathcal{Y}$ is independently and uniformly distributed.
\end{definition}

$\mathcal{X}$ is called the search space. The target of an optimization algorithm is to find a solution $x\in\mathcal{X}$ with the best evaluation value $f(x)\in \mathcal{Y}$.

\begin{definition}[Optimization Algorithm]
An optimization algorithm $\mathscr{A}$ is defined as follows.
\begin{algorithm}[H]
\caption{Optimization Algorithm $\mathscr{A}$}
\begin{algorithmic}[1]
\STATE $i\leftarrow0.$
\STATE $D_0\leftarrow\emptyset.$
\REPEAT
\STATE $i \leftarrow i + 1.$
\STATE Sample $X_i \in 2^\mathcal{X}$ with distribution $\mathscr{A}_i(D_{i-1})$.
\STATE Evaluate $f(X_i) = \{f(x)|x\in X_i\}$.
\STATE $D_{i} \leftarrow D_{i-1} \cup \bigcup_{x \in X_{i}} {\{ x,f(x)\} }.$
\UNTIL $i=g$.
%\RETURN $ \mathop {\arg\min }_{x \in D} f(x)$
\end{algorithmic}
\end{algorithm}
In each iteration, $\mathscr{A}_i$ is a mapping from $2^{\mathcal{X}\times \mathcal{Y}}$ to the set of all distributions over $2^\mathcal{X}$. $\mathscr{A}_1(D_0)$ is a pre-fixed distribution for sampling solutions in the first iteration. $g$ is the maximal iteration number.
\end{definition}
In each iteration, the input of the algorithm $D_{i-1}$ is the historical information, which is a subset of $\mathcal{X} \times \mathcal{Y}$, and the output $\mathscr{A}_i(D_{i-1})$ is a distribution over $2^\mathcal{X}$, with which the solutions to be evaluated next are drawn. Note that the output $\mathscr{A}_i(D_{i-1})$ is deterministic given $D_{i-1}$.

By randomizing the evaluation step (consider $y=f(x)$ as a random variable), we are able to investigate how much acquired information is used in an optimization algorithm. That is, to what extent the action of the algorithm will change when the acquired information changes. Review the example in the introduction. It is clear that the algorithm A only uses the information of ``which one is better", while the information of evaluation values are fully utilized by the algorithm B. But how to express such an observation? Any change in $y_1$ or $y_2$ would cause the algorithm B to search a different location, while only when $I(y_1>y_2)$ changes would the action of the algorithm A change. So, the quantity of utilized information can be expressed by the information entropy of an algorithm's action. The entropy of the action of the algorithm A is one bit, while the entropy of the action of the algorithm B is equal to the entropy of the evaluation values.
Assume $Z$ is the ``action" of the algorithm, $X$ is the positions of the solutions, $Y$ is the evaluation values, (they are all random variables), then we can roughly think the information utilization ratio is ${H(Z|X)}/{H(Y|X)}$. However, optimization algorithms are iterative processes, so the formal definition is more complicated.

\begin{definition}[Information Utilization Ratio]
If $\mathscr{A}$ is an optimization algorithm, the information utilization ratio of $\mathscr{A}$ is defined as follows.
\begin{equation}
\text{IUR}_\mathscr{A}(g) = \frac{\sum_{i = 1}^g {H(Z_i|\overline{X}_{i-1},\overline{Z}_{i-1})}} {\sum_{i = 1}^g {H(Y_i|\overline{X}_i,\overline{Y}_{i-1})}},
\end{equation}
where $g$ is the maximal iteration number,
$
X = \{X_1,X_2,\dotsc,X_g\}
$
is the set of all sets of evaluated solutions,
$
Y = \{f(X_1),f(X_2),\dotsc,f(X_g)\}
$
is the set of all sets of evaluation values,
$
Z = \{\mathscr{A}_1(D_0), \mathscr{A}_2(D_1),\dotsc, \mathscr{A}_g(D_{g-1})\}
$
is the output distributions in all iterations of algorithm $\mathscr{A}$, $\overline{X}_i \triangleq \{X_1,\dotsc,X_i\}, \overline{Y}_i \triangleq \{Y_1,\dotsc,Y_i\},\overline{Z}_i \triangleq \{Z_1,\dotsc,Z_i\}$, $\overline{X}_0 = \overline{Y}_0 = \overline{Z}_0 = \emptyset$.
\end{definition}

Fig. \ref{fig1} shows the relationship among these random variables. Generally, $X_i$ is acquired by sampling with the distribution $Z_i$, $Y_i$ is acquired by evaluating $X_i$, and $Z_i$ is determined by the algorithm according to the historical information $\overline{X}_{i-1}$ and $\overline{Y}_{i-1}$.

For deterministic algorithms (i.e., $H(X_i|Z_i)=0$), the numerator degenerates to $H(Z)$. If function evaluations are independent, the denominator degenerates to $\sum_{i = 1}^g {H(Y_i|X_i)}$.

\begin{figure}[htb]
  \centering
  % Requires \usepackage{graphicx}
  \includegraphics[width=1.0\textwidth]{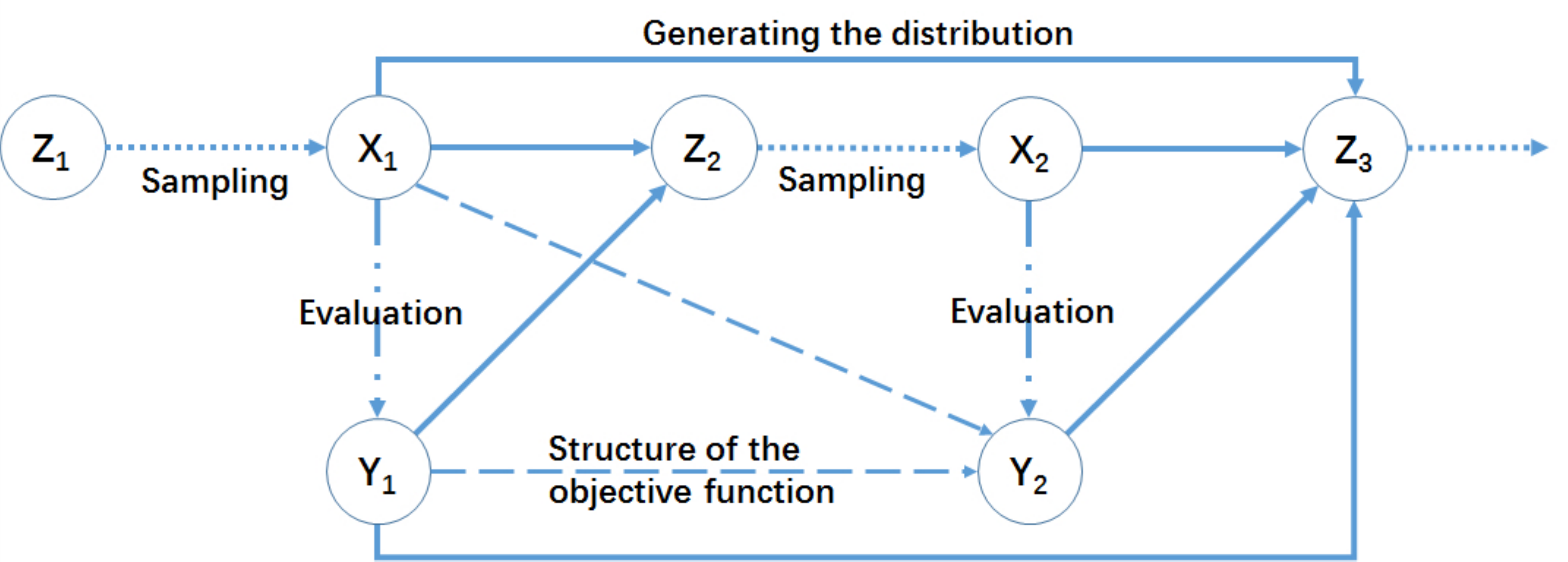}\\
  \caption{Graphic Model}
  \label{fig1}
\end{figure}

The following theorem guarantees that IUR is well defined.
\begin{theorem}
If $0<\sum_{i = 1}^g {H(Y_i|\overline{X}_i,\overline{Y}_{i-1})}<\infty$, then $0\le\text{IUR}_\mathscr{A}(g)\le1$.
\end{theorem}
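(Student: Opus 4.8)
The plan is to prove the two inequalities separately. The lower bound $\mathrm{IUR}_{\mathscr{A}}(g)\ge 0$ is immediate: every summand $H(Z_i\mid\overline{X}_{i-1},\overline{Z}_{i-1})$ in the numerator is a conditional entropy, hence non-negative, so the numerator is non-negative, while the hypothesis makes the denominator strictly positive and finite. All the work lies in the upper bound, which — since the denominator is positive and finite — is equivalent to the single inequality $N_g\le D_g$, where I write $N_k=\sum_{i=1}^{k}H(Z_i\mid\overline{X}_{i-1},\overline{Z}_{i-1})$ for the utilized information and $D_k=\sum_{i=1}^{k}H(Y_i\mid\overline{X}_i,\overline{Y}_{i-1})$ for the acquired information through iteration $k$. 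The one structural fact driving everything is that $Z_i=\mathscr{A}_i(D_{i-1})$ is a \emph{deterministic} function of the history $(\overline{X}_{i-1},\overline{Y}_{i-1})$; in particular $H(Z_i\mid\overline{X}_{i-1},\overline{Y}_{i-1},\overline{Z}_{i-1})=0$, since $\overline{Z}_{i-1}$ is itself a function of that history.

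The first concrete step I would take is to convert each numerator term into a conditional mutual information. Because conditioning on $(\overline{X}_{i-1},\overline{Y}_{i-1})$ renders $Z_i$ deterministic, $H(Z_i\mid\overline{X}_{i-1},\overline{Z}_{i-1})=H(Z_i\mid\overline{X}_{i-1},\overline{Z}_{i-1})-H(Z_i\mid\overline{X}_{i-1},\overline{Z}_{i-1},\overline{Y}_{i-1})=I(Z_i;\overline{Y}_{i-1}\mid\overline{X}_{i-1},\overline{Z}_{i-1})$. This makes precise the intended reading that the action $Z_i$ can only exploit information actually present in the observed values $\overline{Y}_{i-1}$, and it exhibits the one-step lag between acquisition and utilization (the $i=1$ term vanishes, as $Z_1$ is pre-fixed), which is exactly what leaves the final acquired term $H(Y_g\mid\overline{X}_g,\overline{Y}_{g-1})$ as spare headroom on the right.

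The step I expect to be the main obstacle is that the conditioning on the past actions $\overline{Z}_{i-1}$ must be \emph{retained}; discarding it and bounding each term crudely by $H(\overline{Y}_{i-1}\mid\overline{X}_{i-1})$ counts every observation once per later iteration and overshoots by a factor of order $g$. Worse, there is no valid per-term inequality $H(Z_i\mid\overline{X}_{i-1},\overline{Z}_{i-1})\le H(Y_{i-1}\mid\overline{X}_{i-1},\overline{Y}_{i-2})$: an algorithm may ignore early evaluations and let a single later action exploit the whole accumulated history, and a three-iteration example (with $Z_2$ constant but $Z_3$ a bijective function of $(Y_1,Y_2)$) makes the left side strictly exceed the right, so a plain induction on $g$ cannot close. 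The remedy I would pursue is to track the acquired-but-not-yet-utilized information as slack, strengthening the induction to the invariant $D_k-N_k\ge H(\overline{Y}_k\mid\overline{X}_k,\overline{Z}_k)$, the residual uncertainty in all observations after accounting for every action taken so far. The base case $k=0$ is $0\ge 0$. For the step, the determinism identity gives $H(Z_{k+1}\mid\overline{X}_k,\overline{Z}_k)=H(\overline{Y}_k\mid\overline{X}_k,\overline{Z}_k)-H(\overline{Y}_k\mid\overline{X}_k,\overline{Z}_{k+1})$, so adding the fresh acquired term $H(Y_{k+1}\mid\overline{X}_{k+1},\overline{Y}_k)$ and invoking the hypothesis reduces the claim to $H(\overline{Y}_k\mid\overline{X}_k,\overline{Z}_{k+1})+H(Y_{k+1}\mid\overline{X}_{k+1},\overline{Y}_k)\ge H(\overline{Y}_{k+1}\mid\overline{X}_{k+1},\overline{Z}_{k+1})$.

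This final inequality is where the bookkeeping closes, and I would verify it by comparing the two sides term by term against the chain-rule expansion of the right-hand side. For the $\overline{Y}_k$ part, the right side conditions additionally on $X_{k+1}$, which can only decrease entropy, so it is dominated by $H(\overline{Y}_k\mid\overline{X}_k,\overline{Z}_{k+1})$; for the $Y_{k+1}$ part, the extra conditioning variable $\overline{Z}_{k+1}$ is redundant, being a deterministic function of $(\overline{X}_{k+1},\overline{Y}_k)$, so the two match exactly. Thus the invariant propagates, and evaluating it at $k=g$ yields $D_g-N_g\ge H(\overline{Y}_g\mid\overline{X}_g,\overline{Z}_g)\ge 0$, giving $N_g\le D_g$ and hence $\mathrm{IUR}_{\mathscr{A}}(g)\le 1$. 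The essential creative step is guessing the residual-entropy slack term; once it is in hand, the verification rests only on monotonicity of conditioning and the determinism of the $Z_i$'s.
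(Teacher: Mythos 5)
Your proof is correct, and it is organized genuinely differently from the paper's. The paper proves the upper bound by one global chain of identities: starting from $H(X,Z)-\sum_{i=1}^g H(X_i|Z_i)$, it inserts terms that vanish by determinism, swaps entropy differences through the symmetry of mutual information, and then collapses everything by telescoping (``dislocation subtraction'') to the \emph{exact} decomposition $\sum_i H(Z_i|\overline{X}_{i-1},\overline{Z}_{i-1}) = \sum_i H(Y_i|\overline{X}_i,\overline{Y}_{i-1}) - H(Y_g|\overline{X}_g,\overline{Y}_{g-1}) - H(\overline{Y}_{g-1}|Z,\overline{X}_{g-1})$, from which the bound follows by dropping the two non-negative ``wasted information'' terms. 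You instead run an induction on the iteration index with the invariant $D_k-N_k\ge H(\overline{Y}_k|\overline{X}_k,\overline{Z}_k)$, verified using only the determinism of $Z_{k+1}$ given $(\overline{X}_k,\overline{Y}_k)$, the chain rule, and monotonicity of conditioning. Both arguments pivot on the same core fact (determinism makes each numerator term a conditional mutual information, which you state explicitly and the paper uses implicitly), but the routes differ in two substantive ways. First, your proof needs strictly weaker structural assumptions: where you apply monotonicity ($H(\overline{Y}_k|\overline{X}_{k+1},\overline{Z}_{k+1})\le H(\overline{Y}_k|\overline{X}_k,\overline{Z}_{k+1})$), the paper instead asserts \emph{equality} in its justification of Eq.~(\ref{eq7}), which requires the Markov property of the sampling step ($X_{i-1}$ conditionally independent of past evaluations given $Z_{i-1}$); your argument is thus slightly more general and more robust to how $X_i$ is sampled. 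Second, what you lose is exactness: the paper's identity pins down the gap $D_g-N_g$ as precisely the wasted information $H(Y_g|\overline{X}_g,\overline{Y}_{g-1})+H(\overline{Y}_{g-1}|Z,\overline{X}_{g-1})$, which the paper then leans on in its post-theorem discussion of what information an algorithm fails to use, whereas your invariant only lower-bounds the gap by the residual entropy $H(\overline{Y}_g|\overline{X}_g,\overline{Z}_g)$. Your observation that no per-term inequality $H(Z_i|\overline{X}_{i-1},\overline{Z}_{i-1})\le H(Y_{i-1}|\overline{X}_{i-1},\overline{Y}_{i-2})$ can hold (an action late in the run may exploit the whole accumulated history) is correct and explains why both proofs must account for information cumulatively rather than iteration by iteration.
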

\begin{proof}
\begin{align}
&H(X,Z) - \sum_{i = 1}^g {H(X_i|Z_i)} \\
&= \sum_{i = 1}^g {H(X_i,Z_i|\overline{X}_{i-1},\overline{Z}_{i-1})- \sum_{i = 1}^g H(X_i|\overline{X}_{i-1},\overline{Z}_i)} \\
&= \sum_{i = 1}^g {H(Z_i|\overline{X}_{i-1},\overline{Z}_{i-1})} \\\label{eq4}
&= \sum_{i = 2}^g {H(\overline{Z}_i|\overline{X}_{i-1})-\sum_{i = 2}^g H(\overline{Z}_{i-1}|\overline{X}_{i-1})} \\ \label{eq5}
&= \sum_{i = 2}^g {H(\overline{Z}_i|\overline{X}_{i-1})-\sum_{i = 2}^g H(\overline{Z}_i|\overline{X}_{i-1},\overline{Y}_{i-1})} - \sum_{i = 2}^g H(\overline{Z}_{i-1}|\overline{X}_{i-1})\notag \\
&+ \sum_{i = 2}^g H(\overline{Z}_{i-1}|\overline{X}_{i-1},\overline{Y}_{i-2}) \\
&= \sum_{i = 2}^g {-H(\overline{Y}_{i-1}|\overline{Z}_{i},\overline{X}_{i-1})+ \sum_{i = 2}^g H(\overline{Y}_{i-1}|\overline{X}_{i-1})} + \sum_{i = 2}^g H(\overline{Y}_{i-2}|\overline{Z}_{i-1},\overline{X}_{i-1})\notag \\
&- \sum_{i = 2}^g H(\overline{Y}_{i-2}|\overline{X}_{i-1}) \\\label{eq7}
&= \sum_{i = 2}^g -H(\overline{Y}_{i-1}|\overline{Z}_{i},\overline{X}_{i-1}) + \sum_{i = 2}^g H(\overline{Y}_{i-2}|\overline{Z}_{i-1},\overline{X}_{i-2}) + \sum_{i = 2}^g H(Y_{i-1}|\overline{X}_{i-1},\overline{Y}_{i-2}) \\ \label{eq8}
&= -H(\overline{Y}_{g-1}|Z,\overline{X}_{g-1}) + \sum_{i = 1}^g {H(Y_i|\overline{X}_i,\overline{Y}_{i-1})}  - H(Y_g|\overline{X}_g,\overline{Y}_{g-1}) \\
&\le \sum_{i = 1}^g {H(Y_i|\overline{X}_i,\overline{Y}_{i-1})}.
\end{align}
Eq. (\ref{eq4}) holds because \begin{equation}H(Z_1)=0.\end{equation}
Eq. (\ref{eq5}) holds because \begin{equation}H(\overline{Z}_i|\overline{X}_{i-1},\overline{Y}_{i-1}) = H(\overline{Z}_{i-1}|\overline{X}_{i-1},\overline{Y}_{i-2}) = 0.\end{equation}
Eq. (\ref{eq7}) holds because \begin{equation}\sum_{i = 2}^g H(\overline{Y}_{i-2}|\overline{Z}_{i-1},\overline{X}_{i-1})=\sum_{i = 2}^g H(\overline{Y}_{i-2}|\overline{Z}_{i-1},\overline{X}_{i-2}).\end{equation}
Eq. (\ref{eq8}) is by dislocation subtraction.
\end{proof}

The denominator in the definition $\sum_{i = 1}^g {H(Y_i|\overline{X}_i,\overline{Y}_{i-1})}$ represents the information quantity that is acquired in the search process. If function evaluations are independent, then $H(Y_i|\overline{X}_i,\overline{Y}_{i-1})= H(Y_i|X_i,\overline{X}_{i-1},\overline{Y}_{i-1}) = H(Y_i|X_i).$ While the numerator is more obscure. Actually it represents the quantity of the information about the objective function which is utilized by the algorithm (or in other words, the minimal information quantity that is needed to run the algorithm).
Firstly, $\sum_{i = 1}^g {H(Z_i|\overline{X}_{i-1},\overline{Z}_{i-1})} = \sum_{i = 1}^g H(Z_i|\overline{X}_{i-1},\overline{Z}_{i-1})-\sum_{i = 1}^gH(Z_i|\overline{X}_{i-1},\overline{Z}_{i-1},\overline{Y}_{i-1})$
is similar to the concept of information gain in classification problems \cite{Quinlan1986Induction}, which indicates the contribution of the information of $Y$ to the algorithm.
Secondly, the uncertainty of $X$ and $Z$ only lies in two aspects: the random sampling step and the lack of the information from $Y$. Thus $H(X,Z) - \sum_{i = 1}^g {H(X_i|Z_i)}$ can be regarded as the objective function's information that is utilized by the algorithm. And in fact, it is equal to the numerator.
Thirdly, the numerator equals the denominator minus $H(Y_g|\overline{X}_g,\overline{Y}_{g-1}) + H(\overline{Y}_{g-1}|Z,\overline{X}_{g-1})$
which can be seen as the wasted information of $Y$, because 1) the evaluation values in the last iteration $Y_g$ cannot be utilized and 2) the information of previous evaluation values $\overline{Y}_{g-1}$ is fully utilized only if $H(\overline{Y}_{g-1}|Z,\overline{X}_{g-1})=0$, i.e., $\overline{Y}_{g-1}$ can be reconstructed with $Z$ given $\overline{X}_{g-1}$.

%Note that only the information about the objective function is considered in this definition. The utilization of other information, like the positions of the solutions $x$ in the search space, is irrelevant here. Algorithms could update sample solutions even without any evaluation, which would be meaningless. Since the task is to find the optimal point(s) of the objective function, the use of the objective function's information is the most important.

\section{IURs of Heuristic Optimization Algorithms}\label{sec3}
In order to calculate the IURs of algorithms, we further assume $f(x) \in \mathcal{Y}$ is identically and independently distributed (i.i.d).
%These are standard assumptions for theoretical analyses.
%In discrete optimization problems, $\mathcal{Y}$ is usually a finite set. For example, if there are 100 cities in a travelling salesman problem \cite{lawler1985traveling}, $n$ can be up to 100!. In real-world optimization problems, there is always a precision in the evaluation value. So the codomain can always be seen as discrete. Even if the codomain $\mathcal{Y}$ is infinite or uncountable, the calculated results can still reflect the extent of information utilization because $|\mathcal{Y}|$ only influences the denominator. When several algorithms are used to solve the same kind of optimization algorithms, the denominator is not important because it is constant. The second assumption is natural when there is no prior distribution of the objective functions. However, it is also an extreme case where no information is useful. So it can only be used to calculate IURs. It cannot be used to compare the performances of algorithms \cite{wolpert1997no}. The uniform distribution can be easily replaced with any other ones (such as the Gaussian distribution), because it only influences the denominator (by a multiplier).
In most cases, it is unwise to calculate the IUR by definition. To calculate the denominator is quite straightforward under the above assumption, which equals the number of evaluations times $H(f(x))$. For example, if there are 100 cities in a travelling salesman problem \cite{lawler1985traveling} and $f(x)$ obey uniform distribution, then $|\mathcal{Y}| = 100!, H(f(x))=\log 100!$.
While on the other hand, to directly calculate the numerator is difficult and unnecessary. In each iteration, the output $\mathscr{A_i}(D_{i-1})$ is a certain distribution, which is usually determined by some parameters in the algorithm. In fact we can certainly find (or construct) the set of intermediate parameters $M_i$ such that 1) there is a bijection from $M_i$ to $Z_i$ given $\overline{X}_{i-1}$ and 2) $M_i$ is determined only by $\overline{Y}_{i-1}$ (otherwise $H(Z_i|\overline{X}_{i-1},\overline{Y}_{i-1})>0$), then
\begin{equation}
\sum_{i = 2}^g {H(Z_i|\overline{X}_{i-1},\overline{Z}_{i-1})} = \sum_{i = 2}^g {H(M_i|\overline{M}_{i-1})}=H(M).
\end{equation}
We only have to know the information quantity that is required to determine these intermediate parameters.

In the following, we investigate the IURs of several heuristics to show the procedure of calculating the IUR. Although these algorithms are designed for continuous (domain) optimization, the IURs of any kind of (discrete, combinatorial, dynamic, multi-objective) optimization algorithms can be calculated in the same way as long as there are a domain and a codomain. Without loss of generality, the following algorithms are all minimization algorithms, that is, they all intend to find the solution with the minimal evaluation value in the search space.

\subsection{Random Search Algorithms}
\subsubsection{Monte Carlo}
The Monte Carlo (MC) method is often considered as a baseline for optimization algorithms. It is not a heuristic algorithm and usually fails to find acceptable solutions. If the maximal evaluation number is $m$, MC just uniformly randomly sample $m$ solutions from $\mathcal X$.

MC does not utilize any information about the objective function because $Z$ is fixed.

\begin{proposition}
    \begin{equation}
    \text{IUR}_\text{MC} = 0.
    \end{equation}
\end{proposition}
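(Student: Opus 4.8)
The plan is to show that the numerator of $\text{IUR}_\text{MC}(g)$ vanishes while the denominator is strictly positive and finite, so that the ratio is exactly zero. The crux is the observation already flagged in the text: for Monte Carlo the output distribution $\mathscr{A}_i(D_{i-1})$ is the uniform distribution on $\mathcal{X}$ for every $i$, completely independent of the history $D_{i-1}$. Hence each $Z_i = \mathscr{A}_i(D_{i-1})$ is a degenerate (constant) random variable, and this is really the only property of MC I intend to exploit.

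First I would argue term by term that $H(Z_i \mid \overline{X}_{i-1}, \overline{Z}_{i-1}) = 0$ for every $i$. Since a constant random variable carries zero entropy, $H(Z_i) = 0$; and because conditioning never increases entropy, $0 \le H(Z_i \mid \overline{X}_{i-1}, \overline{Z}_{i-1}) \le H(Z_i) = 0$. Summing over $i = 1, \dots, g$ then gives numerator $= 0$.

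Next I would confirm the denominator is a positive finite number so that the quotient is genuinely zero rather than indeterminate. Under the i.i.d.\ assumption the evaluations are independent, so $H(Y_i \mid \overline{X}_i, \overline{Y}_{i-1}) = H(Y_i \mid X_i)$ and the denominator collapses to $\sum_{i=1}^g H(Y_i \mid X_i) = m\, H(f(x))$, where $m$ is the total number of evaluations. As long as $f(x)$ is non-degenerate this quantity lies in $(0, \infty)$, which is precisely the hypothesis under which the Theorem guarantees IUR is well defined. Dividing the zero numerator by this positive denominator yields $\text{IUR}_\text{MC} = 0$.

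There is no real technical obstacle here; the only point deserving care is the final step, where one must invoke the positivity (not merely finiteness) of the denominator to rule out a $0/0$ situation, and this is exactly what the non-degeneracy of $f(x)$ supplies. The conceptual content is simply that an algorithm whose sampling distribution never reacts to observed evaluation values utilizes none of the acquired information.
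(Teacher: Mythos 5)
Your proof is correct and follows the same reasoning as the paper, which simply observes that for MC the output distribution $Z$ is fixed (independent of history), so every term $H(Z_i\mid \overline{X}_{i-1},\overline{Z}_{i-1})$ in the numerator vanishes. Your additional care about the denominator being positive and finite is exactly the hypothesis under which the paper's Theorem 1 makes the IUR well defined, so nothing is missing.
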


\subsubsection{Luus-Jaakola}
Luus-Jaakola (LJ) \cite{luus1973optimization} is a heuristic algorithm based on MC. In each iteration, the algorithm generates a new individual $y$ with the uniform distribution within a hypercube whose center is the position of the current individual $x$. If $f(y)<f(x)$, $x$ is replaced by $y$; otherwise, the radius of the hypercube is multiplied by a parameter $\gamma<1$.

The output of LJ in each iteration is the uniform distribution within the hypercube, which is determined by the position $x$ and the radius. They are both controlled by the comparison result, i.e., $I(f(y)<f(x))$. $f(y)$ is i.i.d, but $f(x)$ is the best in the history. Thus,
$H(M_i|\overline{M}_{i-1}) = H(I(f(y)<f(x))|\overline{M}_{i-1}) = \pi(i-1)$.

\begin{proposition}
    \begin{equation}
    \text{IUR}_\text{LJ}(g) = \frac{\sum_{i=1}^{g-1}\pi(i)}{g H(f(x))}.
    \end{equation}
\end{proposition}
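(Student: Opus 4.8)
The plan is to evaluate the two sums in the definition of $\text{IUR}_\mathscr{A}(g)$ separately: the denominator via the i.i.d. assumption, and the numerator via the intermediate-parameter reduction described just above the proposition, invoking the Lemma to convert each conditional-entropy term into a value of $\pi$.

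First I would handle the denominator $\sum_{i=1}^g H(Y_i|\overline{X}_i,\overline{Y}_{i-1})$. Since LJ evaluates exactly one new solution per iteration, there are $g$ evaluations in $g$ iterations. Because $f(x)$ is i.i.d., the evaluation values are mutually independent and independent of the sampling history, so $H(Y_i|\overline{X}_i,\overline{Y}_{i-1}) = H(Y_i|X_i) = H(f(x))$ for every $i$ (exactly the degeneration noted after the definition). Summing gives the denominator $g\,H(f(x))$.

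Next I would treat the numerator $\sum_{i=1}^g H(Z_i|\overline{X}_{i-1},\overline{Z}_{i-1})$. I would take as intermediate parameter set $M_i$ the pair (center $x$, radius) parametrizing the uniform hypercube distribution $Z_i$. Given $\overline{X}_{i-1}$ this pair is in bijection with $Z_i$, and across iterations it is updated solely through the comparison outcome $I(f(y)<f(x))$, so $M_i$ depends on $\overline{Y}_{i-1}$ only; this is precisely the setting in which $\sum_i H(Z_i|\overline{X}_{i-1},\overline{Z}_{i-1}) = \sum_i H(M_i|\overline{M}_{i-1})$. The first distribution $Z_1$ is pre-fixed, so its term vanishes, and for $i\ge 2$ the only fresh randomness entering $M_i$ is a single comparison bit. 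The crucial observation is that the incumbent value $f(x)$ is the minimum of the previously evaluated i.i.d.\ values while the challenger $f(y)$ is an independent fresh draw, so by the Lemma (using that an indicator and its complement have equal entropy) $H(M_i|\overline{M}_{i-1}) = \pi(i-1)$. Summing over $i=2,\dots,g$ gives $\sum_{i=1}^{g-1}\pi(i)$, and dividing by the denominator yields the claim.

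The main obstacle I anticipate is the bookkeeping around the comparison bit rather than any hard inequality. I would need to verify carefully (i) that the incumbent really is the running minimum of exactly the right number of i.i.d.\ values at each step, so the argument of $\pi$ is correct, and (ii) that conditioning on $\overline{M}_{i-1}$ (equivalently, on the earlier comparison outcomes) leaves the new comparison bit with its unconditional entropy $\pi(i-1)$ — i.e.\ that knowing which earlier challengers won or lost does not alter the distribution of the next win/lose event, which again follows from the i.i.d.\ (exchangeability) of the evaluation values. The delicate point is the index range: one must track which iteration's comparison determines which sampling distribution, confirm that the first iteration contributes nothing, and check that the last comparison is wasted, so that the sum runs exactly over the claimed terms. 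This is where an off-by-one would be easiest to introduce.
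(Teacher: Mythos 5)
Your proposal is correct and follows essentially the same route as the paper: the denominator is $g\,H(f(x))$ (one i.i.d.\ evaluation per iteration), and the numerator is reduced to the intermediate parameters (center, radius), which are driven solely by the comparison bits $I(f(y)<f(x))$, each contributing $H(M_i|\overline{M}_{i-1})=\pi(i-1)$ by the Lemma and summing to $\sum_{i=1}^{g-1}\pi(i)$. Your extra care in checking that conditioning on earlier comparison outcomes leaves each new record-indicator bit at its unconditional entropy (by exchangeability of the i.i.d.\ values) is a point the paper states without justification, but it is the same argument, not a different method.
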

%For example, if $n = 1024, g =10$, then $\textit{IUR}_\textit{LJ}= 0.0621$, which is independent of the base of the logarithm (similarly hereinafter).

\subsection{Evolution Strategies}
\subsubsection{$(\mu,\lambda)$-Evolution Strategy}
$(\mu,\lambda)$-evolution strategy (ES) \cite{back1991survey} is an important heuristic algorithm in the family of evolution strategies. In each generation, $\lambda$ new offspring are generated from $\mu$ parents by crossover and mutation with normal distribution, and then the parents of a new generation are selected from these $\lambda$ offspring. As a self-adaptive algorithm, the step size of the mutation is itself mutated along with the position of an individual.

The distribution for generating new offspring is determined by the $\mu$ parents, namely the indexes of the best $\mu$ of the $\lambda$ individuals. Each set of $\mu$ candidates has the same probability to be the best. $H(M_i|\overline{M}_{i-1}) = H(M_i) = \log \binom{\lambda}{\mu}$, where $\binom{\lambda}{\mu} = \frac{\lambda!}{\mu!(\lambda-\mu)!}$.

\begin{proposition}
\begin{equation}
\text{IUR}_\text{$(\mu,\lambda$)-ES}(g) = \frac{(g-1)\log \binom{\lambda}{\mu}}{g\lambda H(f(x))}.
\label{eq11}
\end{equation}
\end{proposition}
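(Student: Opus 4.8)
The plan is to compute the numerator and denominator of $\text{IUR}_{(\mu,\lambda)\text{-ES}}(g)$ separately, reusing the two simplifications established earlier in this section. For the denominator I would invoke the i.i.d.\ assumption on $f(x)$: comma selection evaluates exactly $\lambda$ freshly mutated offspring in each of the $g$ generations, so the search performs $g\lambda$ independent evaluations in total, and by the remark that the denominator equals the number of evaluations times $H(f(x))$ we get $\sum_{i=1}^g H(Y_i|\overline{X}_i,\overline{Y}_{i-1}) = g\lambda\,H(f(x))$.

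For the numerator I would apply the intermediate-parameter recipe of the section. The natural choice is to let $M_i$ be the unordered index set of the best $\mu$ among the $\lambda$ offspring produced in generation $i-1$. I would then check the two defining conditions: (1)~given $\overline{X}_{i-1}$, which records the positions together with the carried strategy parameters of all offspring ever sampled, the map $M_i\mapsto Z_i$ is a bijection, since the offspring-generating distribution $\mathscr{A}_i(D_{i-1})$ is completely fixed once the $\mu$ surviving parents are identified; and (2)~$M_i$ is a function of $\overline{Y}_{i-1}$ alone, because selection is decided purely by comparing evaluation values. These license the substitution $\sum_{i=2}^g H(Z_i|\overline{X}_{i-1},\overline{Z}_{i-1}) = \sum_{i=2}^g H(M_i|\overline{M}_{i-1})$, the $i=1$ term vanishing because $Z_1$ is pre-fixed and $H(Z_1)=0$.

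The crux is then to evaluate $H(M_i|\overline{M}_{i-1})$. Here I would argue that, because the $\lambda$ evaluation values within a generation are i.i.d.\ (and continuous, so ties have probability zero), every one of the $\binom{\lambda}{\mu}$ size-$\mu$ subsets is equally likely to be the best $\mu$; hence $M_i$ is uniform on $\binom{\lambda}{\mu}$ outcomes and $H(M_i)=\log\binom{\lambda}{\mu}$. Moreover the relative order of one generation's offspring is independent of which subsets were selected earlier, because selection only shifts where the next offspring are centered and does not touch the i.i.d.\ law of their evaluation values; thus $M_i$ is independent of $\overline{M}_{i-1}$ and $H(M_i|\overline{M}_{i-1})=\log\binom{\lambda}{\mu}$. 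Summing the $g-1$ nonzero terms gives a numerator of $(g-1)\log\binom{\lambda}{\mu}$, and dividing by $g\lambda\,H(f(x))$ yields the claimed expression.

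I expect the main obstacle to be the rigorous justification of condition~(1) and of the independence claim, not any arithmetic. The delicate point is that the self-adaptive step sizes are random quantities produced during sampling, so I must be careful that every such strategy parameter is bundled into the sampled solutions $\overline{X}_{i-1}$ (so that conditioning on $\overline{X}_{i-1}$ really pins down $Z_i$ up to the selection $M_i$), and that the i.i.d.\ assumption on $f$ genuinely makes the within-generation ordering exchangeable and independent of the history encoded in $\overline{M}_{i-1}$. Once these modeling points are settled, the uniformity of $M_i$ over the $\binom{\lambda}{\mu}$ selections and the final ratio follow routinely.
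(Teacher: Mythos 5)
Your proposal follows essentially the same route as the paper: it takes $M_i$ to be the index set of the best $\mu$ of the $\lambda$ offspring, argues uniformity over the $\binom{\lambda}{\mu}$ subsets and independence from $\overline{M}_{i-1}$ via the i.i.d.\ assumption to get $H(M_i|\overline{M}_{i-1})=\log\binom{\lambda}{\mu}$, and divides the resulting $(g-1)\log\binom{\lambda}{\mu}$ by the denominator $g\lambda H(f(x))$. In fact your write-up is more careful than the paper's one-line derivation, notably in flagging that the self-adaptive strategy parameters must be bundled into $\overline{X}_{i-1}$ for the bijection condition to hold.
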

%For example, if $n = 1024, g=10, \lambda =10, \mu = 5$, then $\textit{IUR}_\textit{$(\mu,\lambda$)-ES}= 0.0718$.

\subsubsection{Covariance Matrix Adaptation Evolution Strategy}
In order to more adaptively control the mutation parameters in $(\mu,\lambda)$-ES, a covariance matrix adaptation evolution strategy (CMA-ES) was proposed \cite{hansen2005cma}. CMA-ES is a very complicated estimation of distribution algorithm \cite{larranaga2002review}, which adopts several different mechanisms to adapt the mean, the covariance matrix and the step size of the mutation operation. It is very efficient on benchmark functions %\cite{auger2005restart,loshchilov2013cma}
especially when restart mechanisms are adopted. %It won the first place in CEC 2005 \cite{hansen2006compilation} and CEC 2013 \cite{loshchilov2013ranking} competitions.
CMA-ES cannot be introduced here in detail. We refer interested readers to an elementary tutorial: \cite{hansen2005cma}.

Given $\overline{X}_{i-1}$, the mean, the covariance matrix and the step size of the distribution is determined by the indexes and the rankings of the best $\mu$ individuals in each iteration in history. $H(M_i|\overline{M}_{i-1}) = \log \frac{{\lambda !}}{{(\lambda  - \mu )!}}$.

\begin{proposition}
\begin{equation}
\text{IUR}_\text{CMA-ES}(g) = \frac{(g-1)\log \frac{{\lambda !}}{{(\lambda  - \mu )!}}}{g\lambda H(f(x))}.
\label{eq12}
\end{equation}

\end{proposition}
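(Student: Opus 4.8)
The plan is to follow exactly the template already used for the $(\mu,\lambda)$-ES proposition, exploiting the fact that the only structural difference between the two algorithms, as far as the IUR is concerned, is that CMA-ES consumes the full \emph{ranking} of the selected individuals whereas $(\mu,\lambda)$-ES consumes only the unordered \emph{set} of selected individuals. Accordingly I would compute the denominator and the numerator separately and then form their quotient.

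For the denominator I would invoke the i.i.d.\ assumption on $f$ directly. Since the $\lambda$ offspring evaluated in each generation occupy distinct positions and their evaluation values are mutually independent and independent of the past, $H(Y_i\mid\overline{X}_i,\overline{Y}_{i-1})=H(Y_i\mid X_i)=\lambda H(f(x))$. Summing over the $g$ generations gives a denominator of $g\lambda H(f(x))$, which requires no further work.

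For the numerator I would use the intermediate-parameter identity established earlier, namely $\sum_{i=2}^{g}H(Z_i\mid\overline{X}_{i-1},\overline{Z}_{i-1})=\sum_{i=2}^{g}H(M_i\mid\overline{M}_{i-1})$ together with $H(Z_1)=0$. The crux is to identify the correct $M_i$: I claim that, given $\overline{X}_{i-1}$, the mean (weighted recombination), the covariance matrix (rank-$\mu$ and evolution-path updates) and the step size (cumulative step-size adaptation) that define the sampling distribution $Z_i$ are all deterministic functions of the \emph{ordered} indices of the best $\mu$ offspring in every past generation. Once this bijection-given-$\overline{X}_{i-1}$ is in place, the entropy of the newly acquired parameter is just the entropy of the current generation's ordered top-$\mu$ selection. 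By the i.i.d.\ assumption each of the $\lambda!/(\lambda-\mu)!$ orderings of the top $\mu$ out of $\lambda$ is equiprobable conditioned on the past, so $H(M_i\mid\overline{M}_{i-1})=\log\frac{\lambda!}{(\lambda-\mu)!}$. There are $g-1$ such terms, yielding a numerator of $(g-1)\log\frac{\lambda!}{(\lambda-\mu)!}$, and dividing by the denominator gives the claimed formula.

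I expect the main obstacle to be the bijection step, i.e.\ verifying that every state variable CMA-ES carries across generations depends on the history only through these rankings (and through the positions $\overline{X}_{i-1}$, which are conditioned upon). This is more delicate than in $(\mu,\lambda)$-ES because CMA-ES accumulates evolution paths over time; I would argue by induction that each accumulator is updated by a fixed map from the ordered selection and the known positions, so the whole parameter state — and hence $Z_i$ — is a measurable function of the ranking history with no residual dependence on the raw evaluation values (otherwise $H(Z_i\mid\overline{X}_{i-1},\overline{Y}_{i-1})>0$, contradicting the determinism of $\mathscr{A}_i$). The equiprobability of the $\lambda!/(\lambda-\mu)!$ orderings then follows from exchangeability of the i.i.d.\ evaluations, and comparison with the count $\binom{\lambda}{\mu}$ of $(\mu,\lambda)$-ES explains why CMA-ES has a strictly larger numerator: it extracts the ordering information that $(\mu,\lambda)$-ES discards.
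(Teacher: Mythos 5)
Your proposal follows essentially the same route as the paper: the denominator is $g\lambda H(f(x))$ by the i.i.d.\ assumption, and the numerator comes from the intermediate-parameter identity with $M_i$ taken to be the ordered indices (indexes plus rankings) of the best $\mu$ offspring in each generation, each of the $\lambda!/(\lambda-\mu)!$ orderings being equiprobable, giving $(g-1)\log\frac{\lambda!}{(\lambda-\mu)!}$ over iterations $2,\dotsc,g$ since $H(Z_1)=0$. Your added detail --- the induction showing that the mean, covariance matrix, step size and evolution paths depend on the history only through the ranking sequence given $\overline{X}_{i-1}$, and the exchangeability argument for equiprobability --- is precisely the justification the paper compresses into a single asserted sentence, so you have made its terse proof explicit rather than taken a different path.
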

%For example, if $n =1024, g=10, \lambda = 10, \mu = 5$, then $\textit{IUR}_\textit{CMA-ES} =  0.1340$.

Compared with $(\mu,\lambda)$-ES, it is obvious that $\textit{IUR}_\textit{CMA-ES}\geq\textit{IUR}_\textit{$(\mu,\lambda)$-ES}$, because not only the indexes of the $\mu$ best individuals, but also their rankings are used in CMA-ES (to calculate their weights, for example). By utilizing the information of the solutions more thoroughly, CMA-ES is able to obtain more accurate knowledge of the objective function and search more efficiently.

The IURs of Particle Swarm algorithms \cite{eberhart1995new,bratton2007defining} and Differential Evolution algorithms \cite{storn1997differential,zhang2009jade} are also investigated, shown in the appendix. If readers are interested in the IURs of other algorithms, we encourage you to conduct an investigation on your own which can be usually done with limited effort.

\section{IUR versus Performance}\label{sec4}
The IUR is an intrinsic property of a heuristic algorithm, but the performance is not. Besides the algorithm itself, the performance of a heuristic also depends on the termination criterion, the way to measure the performance, and most importantly the distribution of the objective functions. A well-known fact about  performance is that no algorithm outperforms another when there is no prior distribution \cite{wolpert1997no}, which is quite counter-experience. The objective functions in the real world usually subject to a certain underlying distribution. Although it is usually very difficult to precisely describe this distribution, we know that it has a much smaller information entropy than the uniform distribution and hence there is a free lunch \cite{Streeter2003Two,auger2010continuous,everitt2014free}. In this case, the objective function (and resultantly its optimal point) can be identified with limited information quantity (the entropy of the distribution).

Reconsider the setting of the no free lunch (NFL) theorem from the perspective of information utilization. Assume $|\mathcal{X}|=m$ and $|\mathcal{Y}|=n$. Under the setting of NFL (no prior distribution), the total uncertainty of the objective function is $\log n^m = m \log n$. In each evaluation, the information acquired is $\log n$. Therefore, no algorithm is able to certainly find the optimal point of the objective function within less than $m$ times of evaluation even if all acquired information is thoroughly utilized. In this case, enumeration is the best algorithm \cite{English1999Some}. On the contrary, if we already know the objective function is a sphere function, which is determined only by its center, then the required information quantity is $\log m$, and the least required number of evaluations is (more than) ${\log m}/{\log n}=\log_n m$. Suppose the dimensionality of the search space is $d$, then $n$ is $O(m^{\frac{1}{d}})$, $\log_n m$ is $O(d)$, which is usually acceptable. If information is fully utilized ($\textit{IUR}\approx1$), the exact number is $d+1$ \cite{auger2010continuous}. While for algorithms with smaller IURs, more evaluations are needed. For example, if the IUR of another algorithm is half of the best algorithm (with half of the acquired information wasted), then at least about $2d+2$ evaluations are needed.

How much information is utilized by the algorithm per each evaluation determines the lower bound of the required evaluation number to locate the optimal point. \textbf{In this sense, IUR determines the upper bound of an algorithm's performance.} That is, algorithms with larger IURs have greater potential. However, the actual performance also depends on the manner of information utilization and how it accords with the underlying distribution of the objective function. For instance, one can easily design an algorithm with the same IUR as CMA-ES but does not work.

In the following, we will give empirical evidences on the correlation between IUR and performance. The preconditions of the experiments include: 1) the algorithms we investigate here are reasonably designed to optimize the objective functions from the underlying distribution; 2) the benchmark suite is large and comprehensive enough to represent the underlying distribution. The following conclusions may not hold for algorithms that are not reasonably designed or for a narrow or special range of objective functions. In other words, if the manner of information utilization does not accord with the underlying distribution of objective functions, utilizing more information is not necessarily advantageous.

The theoretical correctness of IUR does not rely on these experimental results, but these examples may help readers understand how and to what extent IUR influences performance.
\subsection{Different Parameters of the Same Algorithm}
Sometimes for a certain optimization algorithm the IUR is influenced by only a few parameters. For these algorithms, we may adapt these parameters to show the correlation between the tendency of IUR and the tendency of performance.
%An interesting application of the IUR is to guide the choice of the parameters.

\subsubsection{$(\mu,\lambda)$-ES}
Intuitively, using $\mu = \lambda$ is not a sensible option for $(\mu,\lambda)$-ES (commonly used $\mu / \lambda$ values are in the range from 1/7 to 1/2 \cite{Beyer:2007}) because it makes the selection operation invalid. Now we have a clearer explanation: the IUR of $(\mu,\lambda)$-ES is zero if $\mu = \lambda$ (see Eq. \eqref{eq11}), i.e., $(\mu,\lambda)$-ES does not use any heuristic information if $\mu = \lambda$.

Using a $\mu$ around $\frac{1}{2}\lambda$ may be a good choice for $(\mu,\lambda)$-ES because it leads to a large IUR. When $\mu=\frac{1}{2}\lambda$, the information used by $(\mu,\lambda)$-ES is the most. From the perspective of exploration and exploitation, we may come to a similar conclusion. If $\mu$ is too small (elitism), the information of the population is only used to select the best few solutions, and resultantly the diversity of the population may suffer quickly. If $\mu$ is too large (populism), the information of the population is only used to eliminate the worst few solutions, and resultantly the convergence speed may be too slow.

Different values of $\mu/\lambda$ are evaluated on the CEC 2013 benchmark suite containing 28 different test functions (see Table \ref{tab:cec13}) which are considered as black-box problems \cite{liang2013problem}. The meta parameter is set to $\Delta \sigma = 0.5$. The algorithm using each set of parameters is run 20 times independently for each function. The dimensionality is $d=5$, and the maximal number of function evaluations is $10000d$ for each run. For each fixed $\lambda$, the mean errors of 20 independent runs of each $\mu/\lambda$ are ranked. The rankings are averaged over 28 functions, shown in Fig. \ref{fig2}. $-\log \binom{\lambda}{\mu}/\lambda$ curves are translated along the vertical axis, also shown in Fig. \ref{fig2}.

\begin{table}[htbp]
    \scriptsize
  \centering
  \caption{Test functions of CEC 2013 single objective optimization benchmark suite \cite{liang2013problem}}
    \begin{tabular}{|c|c|l|}
    \hline
          & No.   & Name \\
    \hline
    \multicolumn{1}{|c|}{\multirow{5}[0]{2cm}{Unimodal Functions}} & 1     & Sphere Function \\
    \multicolumn{1}{|c|}{} & 2     & Rotated High Conditioned Elliptic Function \\
    \multicolumn{1}{|c|}{} & 3     & Rotated Bent Cigar Function \\
    \multicolumn{1}{|c|}{} & 4     & Rotated Discus Function \\
    \multicolumn{1}{|c|}{} & 5     & Different Powers Function \\ \hline
    \multicolumn{1}{|c|}{\multirow{15}[0]{2cm}{Basic Multimodal Functions}} & 6     & Rotated Rosenbrock’s Function \\
    \multicolumn{1}{|c|}{} & 7     & Rotated Schaffers F7 Function \\
    \multicolumn{1}{|c|}{} & 8     & Rotated Ackley’s Function \\
    \multicolumn{1}{|c|}{} & 9     & Rotated Weierstrass Function \\
    \multicolumn{1}{|c|}{} & 10    & Rotated Griewank’s Function \\
    \multicolumn{1}{|c|}{} & 11    & Rastrigin’s Function \\
    \multicolumn{1}{|c|}{} & 12    & Rotated Rastrigin’s Function \\
    \multicolumn{1}{|c|}{} & 13    & Non-Continuous Rotated Rastrigin’s Function \\
    \multicolumn{1}{|c|}{} & 14    & Schwefel's Function \\
    \multicolumn{1}{|c|}{} & 15    & Rotated Schwefel's Function \\
    \multicolumn{1}{|c|}{} & 16    & Rotated Katsuura Function \\
    \multicolumn{1}{|c|}{} & 17    & Lunacek Bi\_Rastrigin Function \\
    \multicolumn{1}{|c|}{} & 18    & Rotated Lunacek Bi\_Rastrigin Function \\
    \multicolumn{1}{|c|}{} & 19    & Expanded Griewank’s plus Rosenbrock’s Function \\
    \multicolumn{1}{|c|}{} & 20    & Expanded Scaffer’s F6 Function \\ \hline
    \multicolumn{1}{|c|}{\multirow{8}[0]{2cm}{Composition Functions}} & 21    & Composition Function 1 (Rotated) \\
    \multicolumn{1}{|c|}{} & 22    & Composition Function 2 (Unrotated) \\
    \multicolumn{1}{|c|}{} & 23    & Composition Function 3 (Rotated) \\
    \multicolumn{1}{|c|}{} & 24    & Composition Function 4 (Rotated) \\
    \multicolumn{1}{|c|}{} & 25    & Composition Function 5 (Rotated) \\
    \multicolumn{1}{|c|}{} & 26    & Composition Function 6 (Rotated) \\
    \multicolumn{1}{|c|}{} & 27    & Composition Function 7 (Rotated) \\
    \multicolumn{1}{|c|}{} & 28    & Composition Function 8 (Rotated) \\
    \hline
    \end{tabular}%
  \label{tab:cec13}%
\end{table}%

\begin{figure}[htb]
\subfigure[$\lambda = 10$] {
\includegraphics[width=0.5\columnwidth]{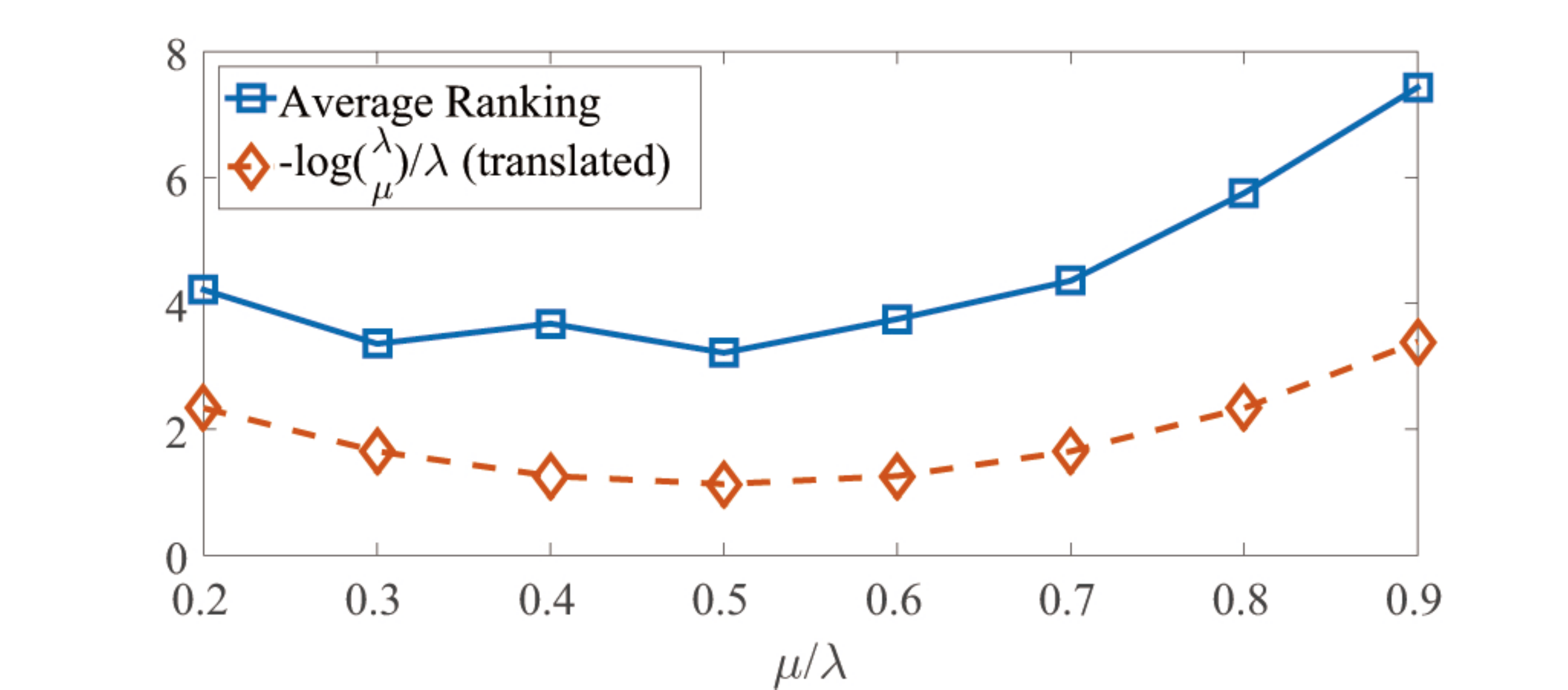}
}%
\subfigure[$\lambda = 20$] {
\includegraphics[width=0.5\columnwidth]{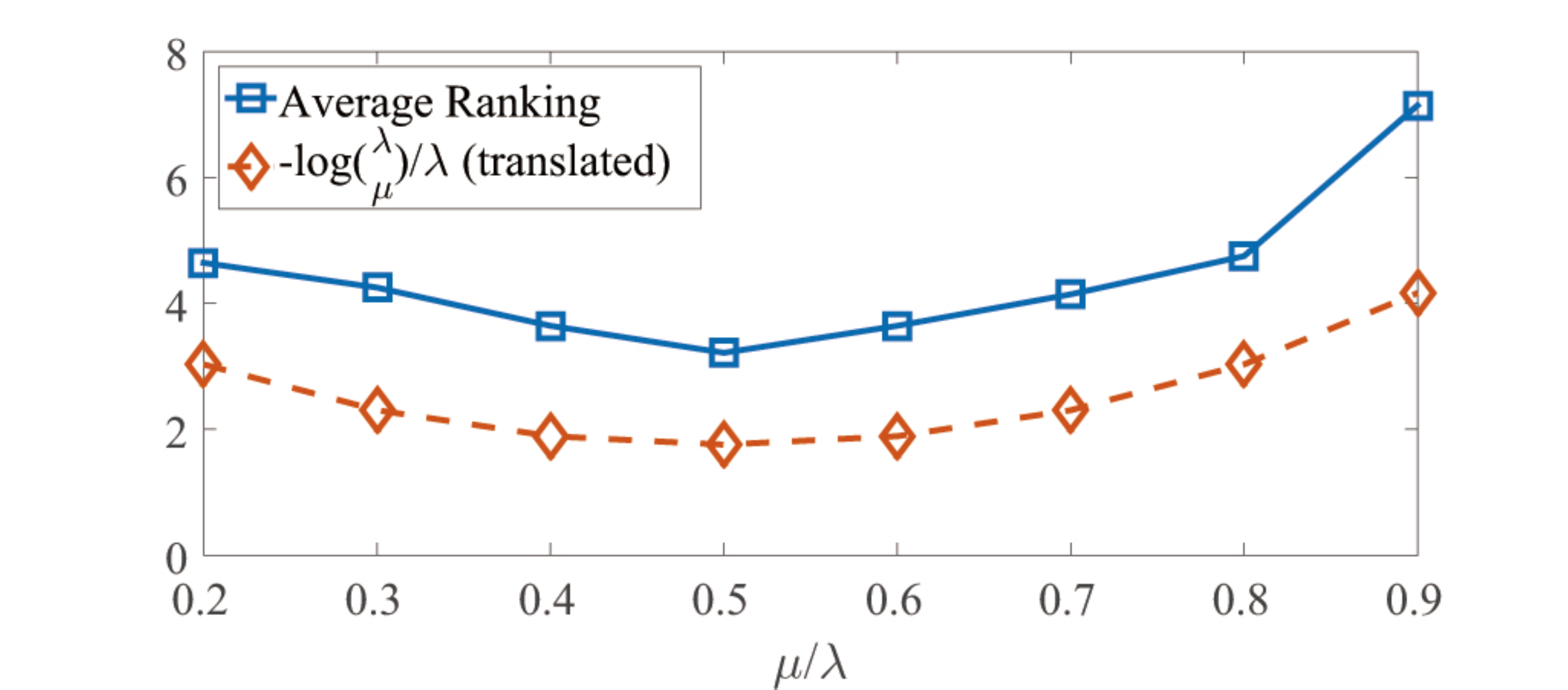}
}%
\\
\subfigure[$\lambda = 30$] {
\includegraphics[width=0.5\columnwidth]{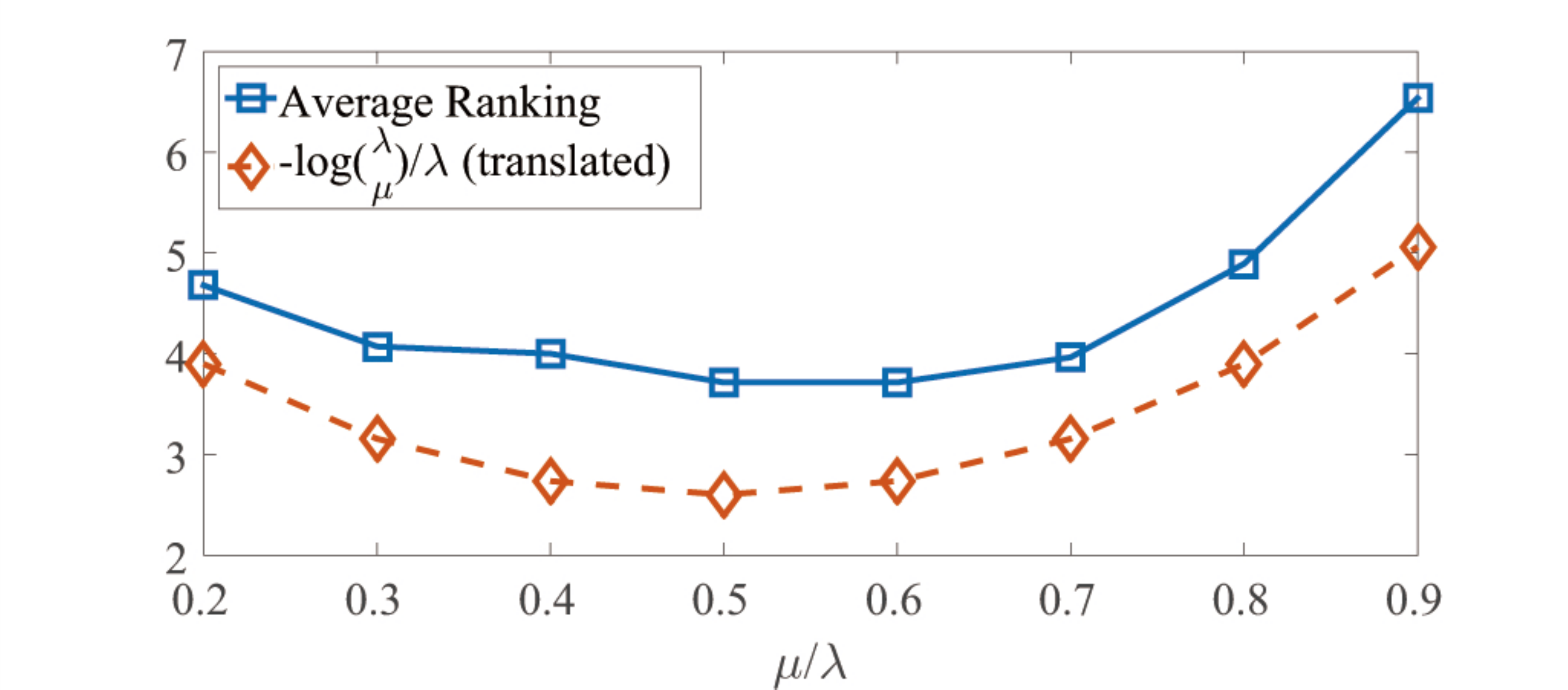}
}%
\subfigure[$\lambda = 40$] {
\includegraphics[width=0.5\columnwidth]{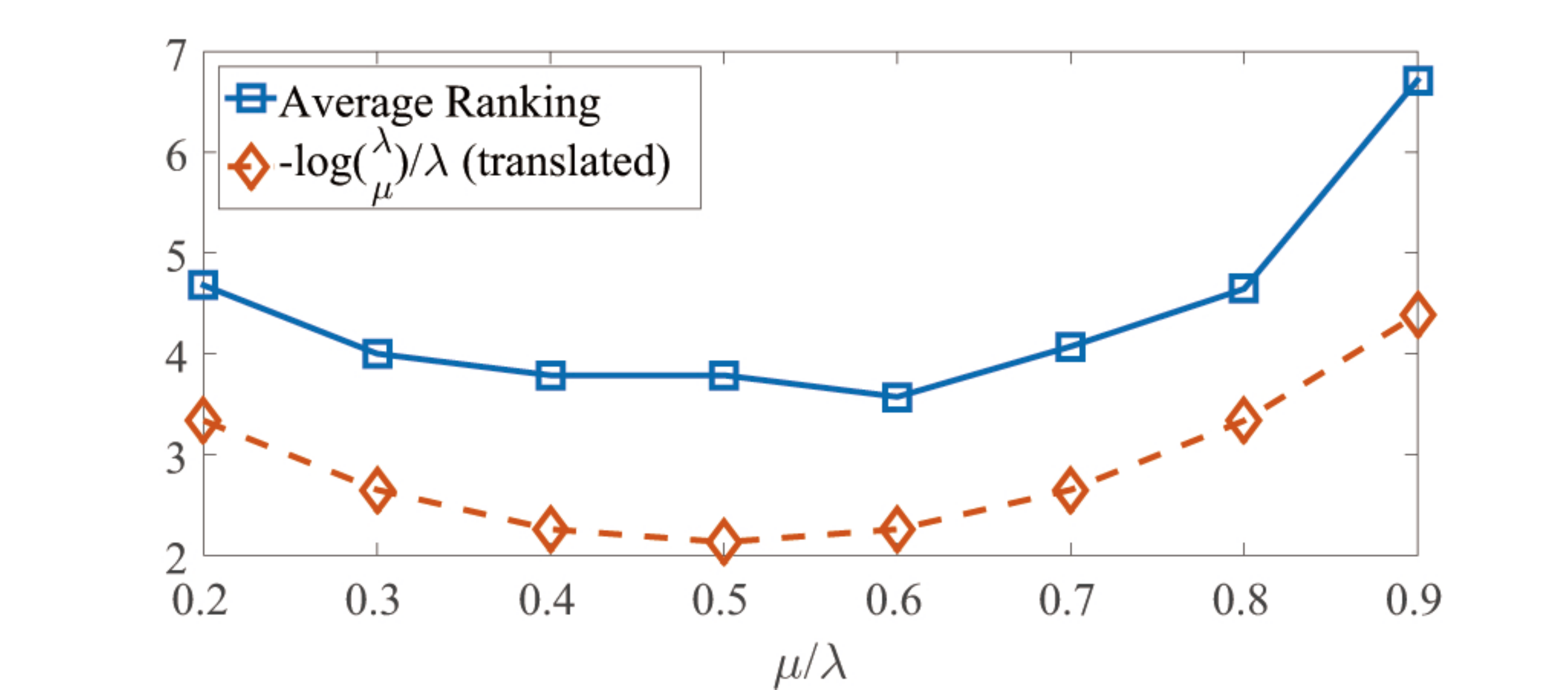}
}%
%\subfigure{
%\includegraphics[width=.5\textwidth]{fig1.pdf}
%\caption{$\lambda = 10$}
%}%
%\subfigure{
%\centering
%\includegraphics[width=.5\textwidth]{fig2.pdf}
%\caption{$\lambda = 20$}
%}
%
%\subfigure{
%\centering
%\includegraphics[width=.5\textwidth]{fig3.pdf}
%\caption{$\lambda = 30$}
%}%
%\subfigure{
%\centering
%\includegraphics[width=.5\textwidth]{fig4.pdf}
%\caption{$\lambda = 40$}
%}
\caption{The average rankings and the $-\log \binom{\lambda}{\mu}/\lambda$ curves of each value of $\lambda$.}
\label{fig2}
\end{figure}

According to the experimental results, $\mu/\lambda$ around 0.5 is a good choice, which accord with our expectation. Moreover, the tendency of the performance (average ranking) curve is generally identical to that of the IUR ($\textit{IUR}_{(\mu,\lambda)-\textit{ES}} \propto \log \binom{\lambda}{\mu}/\lambda$ when $g$ is large). The experimental results indicate a positive correlation between the performance and the IUR: the parameter value with larger IUR is prone to perform better. Thus, the IUR can be used to guide the choice of parameters. After all, tuning the parameters by experiments is much more expensive than calculating the IURs.

\subsubsection{CMA-ES}
Different from $(\mu,\lambda)$-ES, CMA-ES adopts a rank-based weighted recombination instead of a selection operation, in which the rank information of the best $\mu$ individuals is utilized.

On the one hand, the rank-based weighted recombination achieves the largest IUR when $\mu = \lambda$ (see Eq. \eqref{eq12}). The larger $\mu$ is, the more information is used (because the rank information of the rest $\lambda-\mu$ individuals are wasted).

On the other hand, the rank-based weighted recombination also achieves the best performance when $\mu = \lambda$ \cite{Arnold2005Optimal}. This is also an evidence on the correlation between IUR and performance. However, the optimal weighted recombination requires the use of negative weights, which is somehow not adopted in CMA-ES \cite{hansen2005cma}. The manner of information utilization and other conditions (termination criterion, performance measure, etc.) should also be taken into consideration when parameters are chosen. Therefore using $\mu = \lambda$ is probably not the best choice for CMA-ES even though it leads to a large IUR.

\subsection{Algorithms in the Same Family}
Usually different algorithms in the same family utilize information in similar manners, in which case we may compare their performances to show the correlation between IUR and performance. However, we need to be more cautious here because IUR is not the only factor as long as different algorithms in the same family do not utilize information in identical manners.

As shown in Section 3, $\textit{IUR}_\textit{LJ}\geq\textit{IUR}_\textit{MC}$ and $\textit{IUR}_\textit{CMA-ES}\geq\textit{IUR}_\textit{$(\mu,\lambda)$-ES}$. LJ and CMA-ES are more finely designed compared with the previous algorithms since they are able to utilize more information of the objective function. Naturally we would expect that LJ outperforms MC and CMA-ES outperforms $(\mu,\lambda)$-ES.

The four algorithms are evaluated on the CEC 2013 benchmark suite. The parameter of LJ is set to $\gamma = 0.99$. The parameters of $(\mu,\lambda)$-ES are set to $\lambda = 30, \mu =15, \Delta \sigma=0.5$. The parameters of CMA-ES are set to suggested values \cite{hansen2005cma} except that $\sigma = 50$ because the radius of the search space is 100. The dimensionality is $d=5$, and the maximal number of function evaluation is $10000d$ for each run. Each algorithm is run 20 times independently for each function. Their mean errors are shown in Table \ref{tab:addlabel}. The best mean errors are highlighted. Their mean errors are ranked on each function, and the average rankings (AR.) over 28 functions are also shown in Table \ref{tab:addlabel}.

% Table generated by Excel2LaTeX from sheet 'Sheet2'
\begin{table}[htbp]
\scriptsize
  \centering
  \caption{Mean errors and average rankings of the four algorithms and $p$ values}

    \begin{tabular}{|c|cccc|cc|}
    \hline
    \multirow{2}*{F.}    & \multirow{2}*{MC}    & \multirow{2}*{LJ}    & \multirow{2}*{$(\mu,\lambda)$-ES}    & \multirow{2}*{CMA-ES}  & \multirow{2}*{MC vs. LJ} & $(\mu,\lambda)$-ES \\
          &       &       &                       &        &           &    vs. CMA-ES     \\
    \hline
    1     & 2.18E+02 & \textbf{0.00E+00} & 5.91E-12 & \textbf{0.00E+00} & \underline{8.01E-09} & \underline{4.01E-02} \\
    2     & 4.25E+05 & \textbf{0.00E+00} & 3.49E+05 & \textbf{0.00E+00} & \underline{8.01E-09} & \underline{8.01E-09} \\
    3     & 8.02E+07 & \textbf{0.00E+00} & 2.18E+07 & \textbf{0.00E+00} & \underline{1.13E-08} & \underline{1.13E-08} \\
    4     & 4.23E+03 & \textbf{0.00E+00} & 2.20E+04 & \textbf{0.00E+00} & \underline{1.13E-08} & \underline{1.13E-08} \\
    5     & 8.00E+01 & 6.79E+01 & 1.95E-05 & \textbf{0.00E+00} & \underline{1.33E-02} & \underline{1.90E-04} \\
    6     & 9.94E+00 & 2.51E+01 & \textbf{2.46E+00} & 7.86E-01 & \underline{4.17E-05} & \underline{8.15E-06} \\
    7     & 2.02E+01 & 7.10E+01 & 1.66E+01 & \textbf{5.66E+00} & 1.99E-01 & \underline{2.56E-03} \\
    8     & \textbf{1.83E+01} & 2.01E+01 & 2.03E+01 & 2.10E+01 & \underline{3.42E-07} & \underline{1.61E-04} \\
    9     & 2.53E+00 & 1.67E+00 & 2.37E+00 & \textbf{1.08E+00} & \underline{1.48E-03} & \underline{1.63E-03} \\
    10    & 2.30E+01 & 1.78E+00 & 1.30E+01 & \textbf{4.16E-02} & \underline{6.80E-08} & \underline{1.23E-07} \\
    11    & 2.22E+01 & 1.40E+01 & 6.67E+00 & \textbf{6.57E+00} & \underline{3.04E-04} & 8.17E-01 \\
    12    & 2.10E+01 & 1.33E+01 & 1.20E+01 & \textbf{7.36E+00} & \underline{1.12E-03} & \underline{2.04E-02} \\
    13    & 2.21E+01 & 1.90E+01 & 1.87E+01 & \textbf{1.28E+01} & 1.20E-01 & 5.98E-01 \\
    14    & 3.78E+02 & 7.53E+02 & \textbf{1.35E+02} & 4.61E+02 & \underline{1.10E-05} & \underline{7.41E-05} \\
    15    & \textbf{3.84E+02} & 6.85E+02 & 5.27E+02 & 4.52E+02 & \underline{3.99E-06} & 1.81E-01 \\
    16    & 7.43E-01 & \textbf{5.34E-01} & 8.27E-01 & 1.49E+00 & \underline{1.93E-02} & 1.11E-01 \\
    17    & 3.25E+01 & 2.23E+01 & \textbf{9.87E+00} & 1.07E+01 & \underline{1.78E-03} & 3.65E-01 \\
    18    & 3.43E+01 & 1.82E+01 & \textbf{1.01E+01} & 1.01E+01 & \underline{2.60E-05} & 9.89E-01 \\
    19    & 4.08E+00 & 7.21E-01 & 5.45E-01 & \textbf{4.82E-01} & \underline{9.17E-08} & 9.46E-01 \\
    20    & \textbf{1.23E+00} & 1.85E+00 & 2.50E+00 & 1.92E+00 & \underline{1.10E-05} & \underline{6.97E-06} \\
    21    & 3.23E+02 & 3.05E+02 & \textbf{2.55E+02} & 2.80E+02 & \underline{1.94E-02} & 9.89E-01 \\
    22    & 5.91E+02 & 7.91E+02 & \textbf{4.01E+02} & 7.20E+02 & \underline{2.56E-03} & \underline{5.63E-04} \\
    23    & \textbf{6.04E+02} & 8.33E+02 & 7.01E+02 & 6.08E+02 & \underline{8.29E-05} & 3.37E-01 \\
    24    & \textbf{1.26E+02} & 2.04E+02 & 1.99E+02 & 1.76E+02 & \underline{6.80E-08} & \underline{4.60E-04} \\
    25    & \textbf{1.27E+02} & 1.96E+02 & 1.98E+02 & 1.81E+02 & \underline{1.60E-05} & \underline{7.71E-03} \\
    26    & \textbf{1.01E+02} & 2.38E+02 & 1.67E+02 & 1.98E+02 & \underline{1.43E-07} & 7.76E-01 \\
    27    & 3.57E+02 & 3.52E+02 & 3.65E+02 & \textbf{3.27E+02} & 4.25E-01 & \underline{2.47E-04} \\
    28    & 3.05E+02 & \textbf{3.00E+02} & 3.25E+02 & 3.15E+02 & 8.59E-01 & 2.03E-01 \\
    \hline
    AR.   & 2.82  & 2.68  & 2.43  & \textbf{1.82 } &  14 : 10     & 14 : 3 \\
    \hline
    \end{tabular}%

  \label{tab:addlabel}%
\end{table}%

Pair-wise Wilcoxon rank sum tests are also conducted between MC and LJ and between $(\mu,\lambda)$-ES and CMA-ES. The $p$ values are shown in the last two columns of Table \ref{tab:addlabel}. Significant results (with confidence level 95\%) are underlined. The results of LJ are significantly better than MC on 14 functions, and significantly worse on only 10 functions. While the results of CMA-ES are significantly better than $(\mu,\lambda)$-ES on 14 functions, and significantly worse on only 3 functions. Generally speaking, the performance of LJ is better than MC and the performance of CMA-ES is better than $(\mu,\lambda)$-ES. These experimental results imply that the extent of information utilization may be an important factor in the performance.

%Again, they do not imply IUR is the only factor.
The algorithms in the same family utilize information in similar but different manners. In this case, the influence of IUR on the performance is crucial, but sometimes not deterministic. For example, in CMA-ES, there are several different mechanisms proposed to improve the performance. The improvement in IUR does not reflect all of them. The improvement related to IUR is the rank-based weighted recombination. It has significant impact on performance \cite{Arnold2005Optimal,Hansen2004Evaluating}. While other mechanisms such as adapting the covariance matrix and the step size are not related to IUR but also very important. These mechanisms are introduced as different information utilization manners, which help the algorithm to better fit the underlying distribution of objective functions. Similar comparisons can be made between PSO and SPSO and between DE and JADE (see appendix).

%In addition, the fact that SPSO outperforms PSO \cite{bratton2007defining} and JADE outperforms DE \cite{zhang2009jade} has already been proven. (It is shown in the supplementary material that $\textit{IUR}_\textit{SPSO}\geq\textit{IUR}_\textit{PSO}$ and $\textit{IUR}_\textit{JADE}\geq\textit{IUR}_\textit{DE}$.)

However, after all, an algorithm cannot perform very well if little information is used. Hence, just like LJ, CMA-ES, SPSO and JADE, the tendency of elevating the IUR is quite clear in various families of heuristics. Many mechanisms have been proposed to better preserve historical information for further utilization \cite{Deb2000A,Reyes2006Multi,zhang2009jade}. Many general methods (adaptive parameter control \cite{eiben1999parameter}, estimation of distribution \cite{larranaga2002review}, fitness approximation \cite{jin2005comprehensive}, Bayesian approaches \cite{pelikan2005bayesian}, Gaussian process models \cite{buche2005accelerating}, hyper-heuristic \cite{burke2010classification}) have been proposed to elevate the IURs of heuristics. Not to mention these numerous specified mechanisms. In summary, the IUR provides an important and sensible perspective on the developments in this field.

%The opposite is rare.

%Explicitly, the adaptations made in LJ, CMA-ES, JADE and SPSO are effective on improving the performance. Implicitly, they improve performance by improving the information utilization. There are many more algorithms that follow the same pattern: the performance improves along with the IUR. Actually, elevating the IUR is the potential driver of many algorithmic improvement works, such as adaptive parameter control \cite{eiben1999parameter}, estimation of distribution \cite{larranaga2002review}, fitness approximation \cite{jin2005comprehensive}, Bayesian approaches \cite{pelikan2005bayesian}, Gaussian process models \cite{buche2005accelerating}, hyper-heuristic \cite{burke2010classification}, etc.

\subsection{Algorithms in Different Families}
The correlation between the IUR and the performance of the algorithms in different families (such as LJ and $(\mu,\lambda)$-ES) can be vaguer because the manners of information utilization are different, though the above experimental results accord with our expectation ($(\mu,\lambda)$-ES performs better than LJ and $\textit{IUR}_\textit{LJ}\leq\textit{IUR}_\textit{$(\mu,\lambda)$-ES}$ unless $\mu=\lambda$). If algorithms utilize information in extremely different manners, the IUR may not be the deterministic factor. %How the information utilization manner accords with the underlying distribution becomes more important here.
There are infinite manners to utilize information. It is difficult to judge which manner is better. Whether a manner is good or not depends on how it fits the underlying distribution of the objective functions, which is difficult to describe. A well designed algorithm with low IUR may outperform a poorly designed algorithm with high IUR because it utilizes information more efficiently and fits the underlying distribution better. Nonetheless, certainly the extent of information utilization is still of importance in this case because 1) the algorithms with larger IURs have greater potential 2)
the IUR of the ``best" algorithm (if any) must be very close to one and 3) an algorithm that uses little information cannot be a good algorithm.

The exact correlation between the IUR and the performance requires much more theoretical works on investigating the manners of information utilization and how they fit the underlying distributions, which are very difficult but not impossible.

\section{Upper Bound for Comparison-based Algorithms}
Above examples have covered several approaches of information utilization in heuristic optimization algorithms. But the IURs of these algorithms are all not high because they are comparison-based algorithms, in which only the rank information is utilized.
\begin{theorem}[Upper bound for comparison-based algorithms]
If the maximal number of evaluations is $m$, $y=f(x)$ are i.i.d, and algorithm $\mathscr{A}$ is a comparison-based optimization algorithm,
\begin{equation}
\text{IUR}_\mathscr{A} \le \frac{\log m}{H(f(x))}.
\end{equation}
\end{theorem}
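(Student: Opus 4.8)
The plan is to treat the denominator and the numerator of $\text{IUR}_\mathscr{A}$ separately and then take the ratio. For the denominator, I would invoke the standing i.i.d. assumption, under which (as noted right after the definition) $H(Y_i|\overline{X}_i,\overline{Y}_{i-1}) = H(Y_i|X_i) = |X_i|\,H(f(x))$. Summing over the iterations gives
\begin{equation}
\sum_{i=1}^g H(Y_i|\overline{X}_i,\overline{Y}_{i-1}) = \Big(\sum_{i=1}^g |X_i|\Big) H(f(x)) = m\,H(f(x)),
\end{equation}
where $m$ is the total number of function evaluations. This is exactly the acquired-information reading of the denominator.

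For the numerator, I would use the intermediate-parameter reformulation established in Section \ref{sec3}: the numerator equals $H(M)$, where $M=\{M_1,\dots,M_g\}$ and each $M_i$ is a function of $\overline{Y}_{i-1}$ alone. The decisive structural input is that $\mathscr{A}$ is \emph{comparison-based}: its output distribution at every step is invariant under order-preserving transformations of the evaluation values, so each $M_i$ depends on $\overline{Y}_{i-1}$ only through the ranking of the points evaluated so far. Hence the whole collection $M$ is a deterministic function of the ranking $R$ of the $m$ evaluated solutions.

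The remaining steps are routine. Since a deterministic function cannot increase entropy, $H(M)\le H(R)$, and $H(R)$ is at most the logarithm of the number of distinct rankings of $m$ items. A strict ranking takes at most $m!$ values, and $m!\le m^m$ gives $H(R)\le \log m! \le m\log m$ (permitting ties only replaces $m!$ by a count still bounded by $m^m$, so the estimate is unchanged). Combining the two halves,
\begin{equation}
\text{IUR}_\mathscr{A} = \frac{H(M)}{m\,H(f(x))} \le \frac{m\log m}{m\,H(f(x))} = \frac{\log m}{H(f(x))}.
\end{equation}

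I expect the main obstacle to be the second paragraph rather than the arithmetic: one must pin down precisely what \emph{comparison-based} grants, namely that $M$ factors through the ranking $R$ and not through the raw values $\overline{Y}$, and keep the conditioning bookkeeping clean enough that the data-processing inequality $H(M)\le H(R)$ is genuinely justified. Once $M$ is known to be a function of $R$, the counting bound $H(R)\le m\log m$ and the final ratio are immediate.
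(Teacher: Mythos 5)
Your proposal is correct and follows essentially the same route as the paper's proof: both bound the numerator by the entropy of the ranking of the evaluated points ($H(M)\le\log m!$, since a comparison-based algorithm's intermediate parameters factor through that ranking), take the denominator as $m\,H(f(x))$ under the i.i.d.\ assumption, and finish with $\log m!\le m\log m$. The only cosmetic difference is that the paper also covers runs whose actual evaluation count $m'$ is smaller than the budget $m$, using monotonicity of $\log m'!/m'$ in $m'$ --- a case your argument absorbs just as easily via $\log m'!\le m'\log m'\le m'\log m$.
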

\begin{proof}
Suppose in a certain run, the actual evaluation number is $m'\le m$. In this case, $M$ is drawn from a set with cardinal number at most $m'!$ (with $m'$ individuals all sorted), then the maximal information quantity is $H(M)\le\log m'!$ for a comparison-based algorithm. Thus $\emph{IUR}_\mathscr{A} \le \frac{\log m'!}{m' H(f(x))}$. Note that the right hand side is a monotonically increasing function of $m'$, and $\frac{\log m!}{mH(f(x))} \le \frac{\log m}{{H(f(x))}}.$
\end{proof}

Suppose $|\mathcal{Y}|=n$ and $f(x)$ obey uniform distribution, than $\frac{\log m}{H(f(x))} = log_n m$. Typically $m<<n$, thus this upper bound is quite low. Most iterative algorithms do not allow the information in past iterations (because it requires a lot of memory space to do so), in which case the upper bound becomes $\frac{\log \lambda}{H(f(x))}$ where $\lambda$ is the evaluation number in each generation. The IUR of CMA-ES is able to approach this bound when $\mu=\lambda$. That is, CMA-ES has almost the largest IUR in comparison-based algorithms without historical information.

There exist algorithms which use exact evaluation values in the searching process, such as genetic algorithm \cite{holland1975adaptation}, ant colony optimization \cite{dorigo1996ant}, estimation of distribution algorithms \cite{larranaga2002review}, invasive weed optimization \cite{mehrabian2006novel}, artificial bee colony \cite{karaboga2007powerful}, fireworks algorithm \cite{tan2010fireworks}, etc. They can achieve higher IURs, even close to 1, because the cardinal number of the set from which $M$ is drawn can be up to $n^m$. These algorithms have greater potential than comparison-based algorithms and can outperform them if well designed.

\section{Conclusion}\label{sec5}
It is natural and often effective to utilize more heuristic information in optimization algorithms, which has been widely realized. However, there was no metric to reflect the extent of information utilization. In this paper, a metric called the information utilization ratio (IUR) is defined as the ratio of the utilized information quantity over the acquired information quantity. IUR can be an index to reflect how finely and advanced an algorithm is designed. IUR proves to be well defined. Several examples are given to demonstrate the procedure of calculating IURs. Generally speaking, the IUR determines the upper bound of the performance of an optimization algorithm. To further indicate the importance of this metric, several experiments are conducted to show the correlation between the IUR and the performance. The experimental results imply that 1) for a certain algorithm, the parameter value with larger IUR has advantage; 2) for algorithms in the same family, the one with larger IUR is prone to be more efficient; 3) for algorithms in different families, the IUR is also an important factor. We also give the IUR's upper bound for comparison-based algorithms. 

The IUR can be used to guide the choice of parameters, guide the design of new algorithms and guide the improvement of existing algorithms. For example, if you are inventing a new algorithm, or adapting an existing one, it is promising to include mechanisms that can enhance the information utilization in your algorithm. If you want to know which one among several algorithms is more likely efficient before you use them, it would be quite informative to compare their IURs to show which one is better designed and has greater potential.

Most works in the field of heuristic search or optimization focus on inventing new mechanisms or tricks, while few have considered the potential driver behind these works. We consider this work as a fundamental theory, which is surprisingly not easy. Hopefully the definition of IUR will lead to a more systematic manner of research about how mechanisms should be designed and how information should be utilized.

Extending this metric to other fields in artificial intelligence such as classification and time series prediction may be an interesting future work.

\newpage
\begin{appendix}
\subsection{Particle Swarm Algorithms}
\subsubsection{Particle Swarm Optimization}
Particle swarm optimization (PSO) \cite{eberhart1995new} is one of the most famous swarm and heuristic algorithms which is quite simple but surprisingly efficient in numerical optimization. In PSO, a fixed number ($s$) of particles moves in the search space to find the optimal solutions. The position of a particle is updated as follows. In generation $g$, for each particle $i$ and each dimension $j$,
\begin{align}
v_{ij}(g+1) \leftarrow &v_{ij}(g) + {\phi _1}{r_{1,{ij}}}(pbest_{ij}(g) - x_{ij}(g))\notag \\
&+ {\phi _2}{r_{2,{ij}}}(gbest_j(g) - x_{ij}(g)),
\end{align}
\begin{equation}
x_{ij}(g+1) \leftarrow x_{ij}(g)+v_{ij}(g+1),
\end{equation}
where $\phi _1$ and $\phi _2$ are constant coefficients, $r_1$ and $r_2$ are random numbers, $pbest$ is the best position in history found by this particle and $gbest$ is the best position found by the entire swarm.

The output distribution in each generation is determined by $I(f(x_i(g))<f(pbest_i(g-1)))$ and $\mathop {\arg \min }_i f(pbest_i(g))$. Although it is difficult to calculate $H(M)$, we have the lower and upper bounds:
\begin{equation}
s\sum_{i=1}^{g-1}\pi(i)\le H(M)\le\sum_{i=2}^{g}H(M_i)\le (g-1)\log s + s\sum_{i=1}^{g-1}\pi(i).
\end{equation}

\begin{proposition}
\begin{equation}
\frac{s\sum_{i=1}^{g-1}{\pi (i)}}{sgH(f(x))} \le \text{IUR}_\text{PSO}(g) \le \frac{(g-1)\log s+ s\sum_{i = 1}^{g-1} {\pi (i)}}{sgH(f(x))}.
\end{equation}
\end{proposition}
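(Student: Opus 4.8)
The plan is to reduce the proposition to the two-sided bound on $H(M)$ stated just before it, $s\sum_{i=1}^{g-1}\pi(i)\le H(M)\le (g-1)\log s + s\sum_{i=1}^{g-1}\pi(i)$, and then divide by the denominator. First I would pin down the denominator: since each generation evaluates all $s$ particles and the evaluation values are i.i.d., $\sum_{i=1}^g H(Y_i\mid\overline{X}_i,\overline{Y}_{i-1}) = sg\,H(f(x))$, exactly as in the earlier examples. By the general reduction established in Section \ref{sec3}, the numerator equals $H(M)=\sum_{i=2}^g H(M_i\mid\overline{M}_{i-1})$, where $M_i$ is the set of intermediate parameters that determines $Z_i$ given $\overline{X}_{i-1}$. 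For PSO these parameters are the per-particle personal-best indicators $I(f(x_i(g))<f(pbest_i(g-1)))$ together with the global-best index $\arg\min_i f(pbest_i(g))$, so the whole problem becomes bounding the entropy of this collection.

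For the lower bound I would observe that the full collection of personal-best indicators is a deterministic function of $M$, hence $H(M)$ is at least their joint entropy. For a fixed particle, after $i-1$ evaluations the indicator produced in generation $i$ is precisely the record-low indicator $I(\min(\eta_1,\dots,\eta_{i-1})<\eta_i)$, whose entropy is $\pi(i-1)$ by Lemma 1. The key fact is that the record-low indicators of an i.i.d. sequence are mutually independent, and the histories of distinct particles are independent as well; therefore the joint entropy of all personal-best indicators factorizes into $s\sum_{i=1}^{g-1}\pi(i)$, which is the lower bound.

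For the upper bound I would apply subadditivity twice. Since conditioning cannot increase entropy, $H(M)=\sum_{i=2}^g H(M_i\mid\overline{M}_{i-1})\le\sum_{i=2}^g H(M_i)$, and within each generation $H(M_i)$ is at most the entropy of the $s$ personal-best indicators plus the entropy of the global-best index. The former equals $s\,\pi(i-1)$ as above, while the latter is at most $\log s$ because the index ranges over $s$ particles. Summing over $i=2,\dots,g$ gives $(g-1)\log s + s\sum_{i=1}^{g-1}\pi(i)$. Dividing both bounds by $sg\,H(f(x))$ then yields the stated inequalities.

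The main obstacle is the global-best index, and it is exactly what forces an interval rather than a closed form. The index $\arg\min_i f(pbest_i)$ is strongly correlated with the personal-best indicators, being a function of the running per-particle minima, so its joint entropy with them cannot be evaluated in closed form. The lower bound discards this contribution entirely, whereas the upper bound over-counts it as an independent $\log s$ each generation; determining the true value in between would require the joint law of an argmin over correlated order statistics, which I would deliberately avoid. A secondary point worth verifying carefully is the mutual independence of the record-low indicators, since it is what makes the personal-best contribution equal to the exact value $s\sum_{i=1}^{g-1}\pi(i)$ rather than merely a bound.
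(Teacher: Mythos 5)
Your proof is correct and follows essentially the same route as the paper: identify $M$ with the per-particle personal-best indicators together with the global-best index, bound $H(M)$ between $s\sum_{i=1}^{g-1}\pi(i)$ and $(g-1)\log s + s\sum_{i=1}^{g-1}\pi(i)$, and divide by the denominator $sgH(f(x))$. The only difference is that you spell out the justifications the paper merely asserts --- in particular the mutual independence of the record-low indicators (and of distinct particles' histories) that makes the lower bound exact, and the two applications of subadditivity for the upper bound.
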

%For example, if $n = 1024, s =10, g = 10$, then $0.0621\le\textit{IUR}_\textit{PSO}\le 0.0920$.

\subsubsection{Standard Particle Swarm Optimization}
After years of development, many improvements and variants are proposed for PSO. In order to construct a common ground for further researches, a standard particle swarm optimization (SPSO) was defined \cite{bratton2007defining}. Compared with original PSO, there are two main modifications: the local ring topology and the constricted update rule. The constricted update rule uses a new coefficient derived from $\phi _1$ and $\phi _2$ to constrict the velocity to guarantee convergence. In the local ring topology, the $gbest$ in the velocity update equation is replaced with a $lbest$, which is the best position among this individual and its two neighbourhoods on the ring.

For each group (consisting of three particles), information with quantity at most $\log 3$ is needed to decide $lbest$.
\begin{proposition}
\begin{equation}
\frac{s\sum_{i = 1}^{g-1}\pi (i)}{sgH(f(x))} \le \text{IUR}_\text{SPSO}(g) \le \frac{s(g-1)\log 3 + s\sum_{i = 1}^{g-1}\pi (i)}{sgH(f(x))}.
\end{equation}
\end{proposition}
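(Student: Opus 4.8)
The plan is to mirror the computation already carried out for PSO, since SPSO differs from it only in how neighbourhood information is aggregated. First I would fix the denominator. Under the i.i.d.\ assumption each of the $s$ particles contributes exactly one evaluation per generation, so across $g$ generations the search performs $sg$ evaluations and the denominator $\sum_{i=1}^g H(Y_i\mid\overline X_i,\overline Y_{i-1})$ collapses to $sgH(f(x))$. It then remains to bound the numerator, which by the intermediate-parameter identity of Section~\ref{sec3} equals $H(M)=\sum_{i=2}^g H(M_i\mid\overline M_{i-1})$, where $M_i$ is a set of parameters determining the output distribution $Z_i$ given $\overline X_{i-1}$.

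Second, I would read off the content of $M_i$ from the update rule. Given the previous positions, the only objective-function information entering generation $i$ is: (i) a collection $P_i$ of $s$ personal-best comparisons $I(f(x_j(i))<f(pbest_j(i-1)))$, one per particle $j$, deciding whether each $pbest_j$ is refreshed; and (ii) a collection $L_i$ of $s$ local-best selections, one per particle, each giving the index of the best $pbest$ among that particle and its two ring neighbours. Each comparison in $P_i$ pits a fresh i.i.d.\ value against the minimum of that particle's $i-1$ previous values, so by the Lemma (with $g=i-1$) it has entropy $\pi(i-1)$; each selection in $L_i$ chooses among three candidates and hence carries at most $\log 3$ bits.

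For the lower bound I would discard $L_i$ entirely: since $H(M_i\mid\overline M_{i-1})\ge H(P_i\mid\overline M_{i-1})$ and the $s$ comparisons in $P_i$ are conditionally independent with entropy $\pi(i-1)$ each, summing over $i=2,\dots,g$ yields $H(M)\ge s\sum_{i=1}^{g-1}\pi(i)$. For the upper bound I would drop the conditioning and apply subadditivity: $H(M)\le\sum_{i=2}^g H(M_i)\le\sum_{i=2}^g\big(H(P_i)+H(L_i)\big)\le\sum_{i=2}^g\big(s\pi(i-1)+s\log 3\big)$, which equals $s\sum_{i=1}^{g-1}\pi(i)+s(g-1)\log 3$. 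Dividing both bounds by the denominator $sgH(f(x))$ gives the claimed inequalities.

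The hard part, as in the PSO case, is justifying that the personal-best comparison at generation $i$ retains entropy exactly $\pi(i-1)$ after conditioning on the \emph{entire} history $\overline M_{i-1}$, not merely marginally. This needs an exchangeability argument: the rank of particle $j$'s freshly sampled value among its own $i$ values is uniform and independent of the relative ordering of all previously evaluated points, including the cross-particle comparisons recorded in the past local-best selections, so the event that $pbest_j$ improves still has conditional probability $1/i$. The only genuinely new element relative to PSO is the neighbourhood accounting: replacing the single global-best index (worth $\log s$ per generation) by $s$ local-best selections over triples (worth at most $s\log 3$ per generation), which is exactly what turns the $(g-1)\log s$ term of PSO into the $s(g-1)\log 3$ term here.
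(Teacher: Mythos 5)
Your skeleton matches the paper's (the paper itself justifies this proposition with little more than one line on top of the PSO bounds): denominator $sgH(f(x))$, numerator $H(M)$ split into personal-best indicators and local-best selections, lower bound from the former alone, upper bound by subadditivity with $\log 3$ per selection replacing the $\log s$ of PSO's global best. Your upper-bound half and your treatment of the denominator are fine. The problem is the justification you give for the lower bound --- precisely the part you yourself call the hard part --- which rests on a false claim. The personal-best indicator of particle $j$ at generation $i$ is \emph{not} independent of the cross-particle comparisons recorded in the history, so its conditional entropy given $\overline{M}_{i-1}$ is in general strictly less than $\pi(i-1)$. Concretely, take $s=2$ particles after one generation, with first values $a_1,a_2$ and particle 1's fresh value $b_1$, all i.i.d.\ continuous: then $P(b_1<a_1\mid a_1<a_2)=1/3$ while $P(b_1<a_1\mid a_1>a_2)=2/3$, so conditioning on $I(a_1<a_2)$ --- exactly the kind of information a gbest/lbest selection stores --- biases the improvement probability away from $1/2$ and pushes the conditional entropy strictly below $\pi(1)$. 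Since conditioning never increases entropy, your chain $H(M_i\mid\overline{M}_{i-1})\ge H(P_i\mid\overline{M}_{i-1})=s\pi(i-1)$ breaks at the equality: the middle quantity satisfies $H(P_i\mid\overline{M}_{i-1})\le s\pi(i-1)$, generally strictly, so summing it yields an upper bound on the contribution of the comparisons, not the lower bound you need.

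The left inequality is still true, but it needs an argument that avoids per-generation conditioning altogether. Under the i.i.d.\ randomization of $f$, each particle's sequence of evaluation values is i.i.d.\ and the sequences of distinct particles are independent; hence particle $j$'s record indicators $\{P_{ij}\}_{i=2}^{g}$ are functions of the internal ranking of particle $j$'s own values, are mutually independent over $i$ (R\'enyi's record theorem, which is what the paper's Lemma encodes), each with entropy exactly $\pi(i-1)$, and the families for different $j$ are independent of one another. Therefore the whole collection $\{P_{ij}\}$ is mutually independent, and since it is a function of $M$, $H(M)\ge H(\{P_{ij}\})=\sum_{i=2}^{g}\sum_{j=1}^{s}H(P_{ij})=s\sum_{i=1}^{g-1}\pi(i)$, which is the claimed lower bound; your subadditivity argument for the right inequality then completes the proposition.
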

%For example, if $n = 1024, s =10, g = 10$, then $0.0621\le\emph{IUR}_\emph{SPSO} \le  0.2047$.

Usually $\emph{IUR}_\emph{PSO}\leq\emph{IUR}_\emph{SPSO}$ though their exact values are difficult to derive. It turns out that the information utilization ratio of the local model is larger than the global model because in local topology the particles interact with each other more frequently.

According to experimental results, SPSO significantly outperform PSO on a large range of test functions. \cite{bratton2007defining}

\subsection{Differential Evolution Algorithms}
\subsubsection{Differential Evolution}
Differential evolution (DE) \cite{storn1997differential} is a powerful heuristic algorithm for numerical optimization. The number of individuals in DE is also fixed. The mutation is conducted as below (take DE/rand/1 as an example). For each $x$ in the population, generate
\begin{equation}
z = {x_{r1}} + F({x_{r2}} - {x_{r3}}),
\end{equation}
where $r1,r2$ and $r3$ are random indexes and $F$ is a constant coefficient. Then a crossover is conducted between $z$ and $x$ to generate a new candidate $y$, where there is a parameter $CR$ to control the probability that a dimension of $y$ is identical to that of $z$. If $f(y)<f(x)$, $x$ is replaced with $y$, otherwise, $x$ is kept.

In DE, the distribution of generating new offspring is determined by $I(f(y)<f(x))$ of each individual. So the IUR of DE is equal to that of LJ with the same $g$. However, they would be different with the same number of evaluation times.

\begin{proposition}
\begin{equation}
\text{IUR}_\text{DE}(g) = \frac{s\sum_{i = 1}^{g-1}\pi (i)}{sgH(f(x))}.
\end{equation}
\end{proposition}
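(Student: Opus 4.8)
The plan is to reduce the computation to the template already established for Luus--Jaakola (LJ), exploiting the fact that differential evolution runs $s$ parallel selection lineages, one per population slot, whose survival is governed exactly by the comparison $I(f(y)<f(x))$. First I would fix the intermediate parameters $M_i$: given $\overline{X}_{i-1}$, the output distribution $Z_i$ (the differential-mutation-plus-crossover distribution) is determined by the current population, which in turn is determined by the survival pattern, i.e.\ by the collection of comparison results $I(f(y_k)<f(x_k))$ over the $s$ slots in the past generations. Hence there is a bijection from $M_i$ to $Z_i$ given $\overline{X}_{i-1}$, and $M_i$ depends only on $\overline{Y}_{i-1}$, so by the identity $\sum_{i=2}^g H(Z_i\mid\overline{X}_{i-1},\overline{Z}_{i-1})=H(M)$ the numerator equals $\sum_{i=2}^g H(M_i\mid\overline{M}_{i-1})$.

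For the denominator I would invoke the i.i.d.\ assumption, under which $H(Y_i\mid\overline{X}_i,\overline{Y}_{i-1})=H(Y_i\mid X_i)$. Since DE performs exactly $s$ evaluations per generation over $g$ generations, this gives $\sum_{i=1}^g H(Y_i\mid X_i)=sg\,H(f(x))$, the claimed denominator.

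The crux is the per-step increment $H(M_i\mid\overline{M}_{i-1})=s\,\pi(i-1)$, which I would argue in two stages. Within a single slot $k$, by generation $i$ the slot has accumulated $i$ values from distinct points, hence (by the i.i.d.\ assumption) i.i.d.; the new comparison pits the fresh offspring value against the minimum of the $i-1$ earlier values, so by exchangeability the event ``offspring wins'' has probability $1/i$ regardless of the past comparison outcomes, and the Lemma gives conditional entropy $\pi(i-1)$ --- exactly the LJ increment. Across slots, I would argue the $s$ results at a generation are mutually independent and independent of the past: although the offspring \emph{position} of slot $k$ is coupled to the other slots through $z=x_{r1}+F(x_{r2}-x_{r3})$, its \emph{value} is a fresh draw independent of all previously observed values, and the running minima of different slots are minima over disjoint blocks of i.i.d.\ values. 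Hence the entropies add, giving $H(M_i\mid\overline{M}_{i-1})=s\,\pi(i-1)$ and numerator $\sum_{i=2}^g s\,\pi(i-1)=s\sum_{i=1}^{g-1}\pi(i)$. Dividing yields the stated formula and makes transparent why the $s$ cancels, so that $\text{IUR}_\text{DE}(g)=\text{IUR}_\text{LJ}(g)$ for the same $g$.

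The main obstacle I anticipate is the cross-slot independence claim. It rests entirely on separating the \emph{position} of an offspring (which genuinely depends on the rest of the population, and hence on past values) from its \emph{evaluation}, which under the i.i.d.-values assumption is an independent fresh draw as long as the offspring lands on a not-previously-evaluated point. I would therefore need to assume (or note as generic for continuous $f$) that the sampled positions are almost surely distinct, so that no value is re-used across slots or generations; granting this, the comparison results reduce to functions of pairwise-disjoint blocks of i.i.d.\ values, and additivity of entropy over independent quantities closes the argument.
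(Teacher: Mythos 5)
Your proof is correct and follows essentially the same route as the paper, which simply observes that DE's output distribution is determined by the per-individual comparison bits $I(f(y)<f(x))$, so that each of the $s$ slots contributes the LJ increment $\pi(i-1)$ per generation and hence $\text{IUR}_\text{DE}(g)$ equals $\text{IUR}_\text{LJ}(g)$ for the same $g$. The cross-slot conditional independence you flag as the crux (comparisons over disjoint blocks of i.i.d.\ values, hence additive entropies) is precisely the step the paper leaves implicit; your treatment fills it in without changing the approach.
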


%For example, if $n = 1024, s =10, g = 10$, then $\emph{IUR}_\emph{DE} =  0.0621$.

IURs of some other DE variants are given in Table \ref{tab:de}.

\begin{table}[htb]
  \centering
  \caption{IURs of other DE variants}
    \begin{tabular}{|c|c|}
    \hline
          & IUR   \\
          \hline
    DE/best/1   &  $=\emph{IUR}_\emph{PSO}$   \\
    \hline
    DE/current-to-best/1 & $=\emph{IUR}_\emph{PSO}$ \\
    \hline
    DE/rand/2 &  $=\emph{IUR}_\emph{DE}$ \\
    \hline
    DE/best/2 & $=\emph{IUR}_\emph{PSO}$ \\
    \hline
    \end{tabular}%
  \label{tab:de}%
\end{table}%

\subsubsection{JADE}
JADE \cite{zhang2009jade} is an important development of DE. There are three main adaptations proposed in JADE:
\begin{enumerate}
\item A DE/current-to-$p$best/1 mutation strategy. In JADE,
\begin{equation}
z_i = x_i + F_i(x^p_{best}-x_i) + F_i({x_{r1}} - {x_{r2}}),
\end{equation}
where $x^p_{best}$ is a randomly chosen individual from the $100p\%$ best individuals.
\item An optional external archive. %In DE, worse candidates in the comparisons are just abandoned. JADE adopts an external archive to store some of them. The $x_{r2}$ in the above equation can be chosen either from the current population or from the external archive.
\item Adaptive mutation parameters. %The means of successful parameters $CR$ and $F$ are recorded to direct future ones. Each individual has a unique $F_i$ and $CR_i$ in JADE.
\end{enumerate}

External archive is a useful tool to improve information utilization. However, in JADE these individuals are just randomly chosen and randomly removed from the archive, where no information of the objective function is used. Compared to DE, JADE elevates IUR after all because the indexes of the best $100p\%$ individuals are used. Note that the output distribution is determined only when all indexes of the best $100p\%$ individuals are given.
\begin{proposition}
\begin{equation}
\frac{s\sum_{i = 1}^{g-1}\pi (i)}{sgH(f(x))} \le \text{IUR}_\text{JADE}(g) \le \frac{(g-1)\log \binom{s}{ps}+s\sum_{i = 1}^{g-1} {\pi (i)} }{sgH(f(x))}.
\end{equation}
\end{proposition}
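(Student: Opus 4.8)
The plan is to follow the recipe established in Section \ref{sec3}: express the numerator of $\text{IUR}_\text{JADE}(g)$ through a set of intermediate parameters $M=\{M_i\}$ that determine the output distributions, bound $H(M)$ from both sides, and keep the denominator fixed at $sgH(f(x))$ by the i.i.d. assumption (one evaluation per individual per generation, $sg$ evaluations in total, each of entropy $H(f(x))$).

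First I would identify, for each generation $i$, the intermediate parameters $M_i$ that determine the output distribution $Z_i$ given $\overline{X}_{i-1}$. For JADE these are of two kinds: the per-individual selection outcomes $C_i=\{I(f(y_k)<f(x_k))\}_{k=1}^{s}$, which decide the surviving population exactly as in DE, and the index set $T_i$ of the best $ps$ individuals, which the DE/current-to-$p$best/1 mutation needs in order to pick $x^p_{best}$. I would then verify the two conditions from Section \ref{sec3}: that $(C_i,T_i)$ is in bijection with $Z_i$ given $\overline{X}_{i-1}$, and that it is a function of $\overline{Y}_{i-1}$ only. The crucial observation is that the external archive stores and discards individuals by objective-free randomness, and the adaptive means $\mu_F,\mu_{CR}$ are updated solely from which trials succeeded, i.e. from $C_i$, with the realized $F,CR$ values recoverable from the recorded positions in $\overline{X}$. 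Hence no objective-function information enters $M_i$ beyond $(C_i,T_i)$, and the numerator equals $H(M)=\sum_{i=2}^{g}H(M_i\mid\overline{M}_{i-1})$.

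Next I would bound each term $H(M_i\mid\overline{M}_{i-1})=H(C_i,T_i\mid\overline{M}_{i-1})$. For the lower bound I would discard $T_i$ via $H(C_i,T_i\mid\overline{M}_{i-1})\ge H(C_i\mid\overline{M}_{i-1})$ and note that, under the i.i.d. assumption, a fresh offspring evaluation is an independent draw, so $C_i$ is independent of the past $\overline{M}_{i-1}$; each incumbent is the running minimum of $i-1$ i.i.d. values and is beaten by a fresh challenger with probability $1/i$, so by Lemma 1 each individual contributes $\pi(i-1)$ and independence across the $s$ individuals gives $H(C_i\mid\overline{M}_{i-1})=s\pi(i-1)$, reproducing the DE numerator $s\sum_{i=1}^{g-1}\pi(i)$. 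For the upper bound I would use subadditivity, $H(M_i\mid\overline{M}_{i-1})\le H(C_i)+H(T_i)$, with $H(C_i)=s\pi(i-1)$ as above and $H(T_i)\le\log\binom{s}{ps}$ since $T_i$ ranges over the $\binom{s}{ps}$ size-$ps$ subsets of an $s$-element set, the uniform case being the worst. Summing over $i=2,\dots,g$ yields $s\sum_{i=1}^{g-1}\pi(i)$ and $(g-1)\log\binom{s}{ps}+s\sum_{i=1}^{g-1}\pi(i)$ respectively, and dividing by $sgH(f(x))$ gives the two stated inequalities.

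The main obstacle I expect is the first step, the precise accounting of information in $M_i$, specifically showing that JADE's two genuinely adaptive components contribute nothing beyond $(C_i,T_i)$. The archive is easy, as its management uses no objective values; the parameter adaptation is subtler, since the update of $\mu_F,\mu_{CR}$ uses the realized $F,CR$ of successful trials, and I must argue that these realizations belong to the sampling randomness encoded in $\overline{X}$ rather than being fresh objective-function information, so that conditioning on $\overline{X}_{i-1}$ eliminates them and only the success pattern $C_i$ survives. A secondary delicate point is the conditional-independence claim $H(C_i\mid\overline{M}_{i-1})=s\pi(i-1)$: it rests on the exchangeability of i.i.d. evaluations, exactly the mechanism already used for LJ and DE, and it is worth stating explicitly that conditioning on the previous top-$ps$ sets $\overline{T}_{i-1}$ does not disturb it, since those depend only on past values that are independent of the fresh offspring driving $C_i$.
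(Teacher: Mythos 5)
Your overall architecture --- intermediate parameters $M_i=(C_i,T_i)$, a lower bound from the DE-style success indicators, an upper bound from subadditivity plus $H(T_i)\le\log\binom{s}{ps}$ --- is exactly the route the paper intends (the paper itself only sketches it in a remark about the archive and the top-$100p\%$ indexes), and your upper bound argument is sound. The genuine gap is in the lower bound. You claim $H(C_i\mid\overline{M}_{i-1})=s\pi(i-1)$, arguing that conditioning on the previous top-$ps$ sets $\overline{T}_{i-1}$ is harmless because they ``depend only on past values that are independent of the fresh offspring driving $C_i$.'' But $C_i$ is not driven by the fresh evaluations alone: each component is a comparison $I(f(y_k)<f(x_k))$ between a fresh draw and an incumbent, and $\overline{T}_{i-1}$ carries information about that incumbent's value through its rank across slots. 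Concretely, take $s=2$, $ps=1$, generation $2$, uniform i.i.d.\ evaluations, with incumbents $u_1,u_2$ and fresh values $v_1,v_2$: unconditionally $P(v_1<u_1)=1/2$, but $P(v_1<u_1\mid u_1<u_2)=1/3$ --- a slot known to hold the best incumbent is harder to beat, so $H\bigl(I(v_1<u_1)\mid T_1\bigr)<\pi(1)$. Hence in general $H(C_i\mid\overline{M}_{i-1})<s\pi(i-1)$ strictly, and your chain $H(M_i\mid\overline{M}_{i-1})\ge H(C_i\mid\overline{M}_{i-1})$ bounds each term below by something smaller than what you need; the per-generation route as written does not prove the stated lower bound.

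The repair is short but must be global rather than per-generation: write $\sum_{i}H(M_i\mid\overline{M}_{i-1})=H(M)=H(C,T)\ge H(C)$ by monotonicity of entropy, and only then expand $H(C)=\sum_i H(C_i\mid\overline{C}_{i-1})$, conditioning on past success indicators alone. There the independence you invoke does apply --- within a slot the successive record indicators of an i.i.d.\ sequence are mutually independent (R\'enyi's records argument, the same mechanism behind the paper's LJ and DE computations), and across slots the draws are disjoint --- giving $H(C)=s\sum_{i=1}^{g-1}\pi(i)$ and hence the stated lower bound. Everything else in your write-up stands; in particular your accounting for the archive and for the $\mu_F,\mu_{CR}$ adaptation (realized $F,CR$ values recoverable from $\overline{X}$, so only the success pattern and the top-$ps$ set enter $M_i$) is more careful than the paper's own remark and is correct.
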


According to experimental results, JADE significantly outperform DE on a large range of test functions. \cite{zhang2009jade}
\end{appendix}
%For example, if $n = 1024, s =10, g = 10, p = 0.2$, then $0.0621\le\textit{IUR}_\textit{JADE} \le 0.1115$.

%\section*{References}
%\newpage
\bibliographystyle{unsrt}
\bibliography{AAAI17}
\end{document}